\documentclass{article}
\usepackage{spconf, amsmath, amsfonts, amsthm, graphicx}
\usepackage{booktabs} 
\usepackage{bm,  algorithm, algorithmic}

\newcommand{\E}{\text{E}}
\renewcommand{\P}{\text{P}}
\newcommand{\I}{\mathrm{I}}

\newtheorem{theorem}{Theorem}
\newtheorem{corollary}{Corollary}[theorem]

\title{Sequential Maximum Margin Classifiers for Partially Labeled Data}
%
\name{Elizabeth Hou, Alfred O. Hero \thanks{This work was partially supported by the Consortium for Verification Technology under Department of Energy National Nuclear Security Administration award number DE-NA0002534 and partially by the University of Michigan ECE Departmental Fellowship.}}
\address {University of Michigan \\ Dept. of Electrical Engineering and Computer Science \\
1301 Beal Avenue, Ann Arbor, MI 48109-2122 }

\begin{document}
\ninept
\maketitle
\begin{abstract}
In many real-world applications, data is not collected as one batch, but sequentially over time, and often it is not possible or desirable to wait until the data is completely gathered before analyzing it. Thus, we propose a framework to sequentially update a maximum margin classifier by taking advantage of the Maximum Entropy Discrimination principle. Our maximum margin classifier allows for a kernel representation to represent large numbers of features and can also be regularized with respect to a smooth sub-manifold, allowing it to incorporate unlabeled observations. We compare the performance of our classifier to its non-sequential equivalents in both simulated and real datasets.
\end{abstract}
\begin{keywords}
semi-supervised classification, support vector machines, maximum entropy, maximum margin classifiers
\end{keywords}
\section{Introduction}
\label{sec:intro}

As the popularity of big data increases and more data is being gathered, the importance of sequential models that are able to continuously update with new data has increased. These models are particularly crucial in high throughput real-time applications such as speech or streaming text classification. To this end, we propose a sequential framework to update the probabilistic maximum margin classifier built from the Maximum Entropy Discrimination (MED) principle of \cite{NIPS1999_1733}.

The proposed sequential MED framework can be cast as recursive Bayesian estimation where the likelihood function is a log-linear model formed from a series of constraints and weighted by Lagrange multipliers. In the Gaussian case it shares similarities with the problem of Gaussian process classification, which has been previously studied \cite{wahba1999support, jaakkola1999probabilistic, smola1998connection, opper1999gaussian, Sollich2002, Rasmussen:2005:GPM:1162254}, but to the best of our knowledge, a method to recursively update the Gaussian process classifier has not been developed. In the single time point case, sequential MED can be specialized to the support vector machine \cite{smola1998connection} and Laplacian support vector machine \cite{Belkin:2006:MRG:1248547.1248632} as previously discussed in \cite{NIPS1999_1733} and \cite{hou}.

We are interested in situations where we receive a stream of data $ \bm{X}_{(1)}, \bm{X}_{(2)}, \ldots$ over time $t$ where each $X_{(t)}$ is a matrix of dimension $p \times n$, with $p$ denoting the number of feature variables and $n$ denoting the number of i.i.d. samples, where $n=n_{(t)}$ may vary with time. In the fully labeled scenario, the data has corresponding labels  $y_i = [1, -1] \, \forall i \text{ and } t$; however in the partially labeled scenario, at each time point $t$, only $ l_{(t)} < n_{(t)} $ of the samples have labels. We define the observed data at any time point $t$ as $\mathcal{D}_{(t)} =  \{\bm{X}_{(t)}, \bm{y}_{(t)} \}$ and all observed data up to time $\tau$ as $ \{ \mathcal{D}_{(t)} \}_{t=1}^\tau$. Such scenarios would arise in a variety of domains such as a satellite that only transmits its data daily or a government agency that only releases its data quarterly with their corresponding reports. The rest of the paper is organized as follows: Section 2 and Section 3 will discuss how to sequentially update the corresponding MED models for supervised and semi-supervised classification. Section 4 validates the method by simulation and we present an application to a dataset of spoken letters of the English alphabet.

\section{Sequential MED} \label{sec:SeqMED}

Constrained relative entropy minimization is used to estimate the closest distribution to a given prior distribution subject to a set of moment constraints. The authors of \cite{koyejo2013representation} show that, if the prior distribution is from the exponential family, then the density that optimizes the constrained relative entropy problem is also a member of the exponential family.  Similar to Bayesian conjugate priors, there exist relative entropy conjugate priors that facilitate evaluation of the closest distribution. These produce optimal constrained relative entropy densities, which can be thought of as posteriors, from the same parametric family as the prior. Maximum entropy discrimination (MED) \cite{NIPS1999_1733} also admits conjugate priors as it a special case of constrained relative entropy minimization where one of the constraints is over a parametric family of discriminant functions $ \mathcal{L}(\bm{X} | \bm{\Theta}) $.

\subsection{Review of MED for Maximum Margin Classification}

In this paper, we are interested in maximum margin binary classifiers. In this case the discriminant function $ \mathcal{L}(\bm{X} | \bm{\theta}, b) = f(\bm{X}) \bm{\theta} + b  $ is linear for some feature transformation $f(\cdot)$, feature weights vector $\bm{\theta}$, and bias term $b$. Slack variables $\gamma_i$ are used to create a margin in the constraints $\E(y_i ( f(\bm{X}_i) \bm{\theta} + b) - \gamma_i)$, the expected hinge loss with slack variables.  The MED objective function is
\begin{flalign} \label{MED_obj}
&\underset{\P(\bm{\Theta}, \bm{\gamma} |  \mathcal{D} )}{\min} \text{KL}\left(\P(\bm{\Theta}, \bm{\gamma} | \mathcal{D} || \P_0(\bm{\Theta}, \bm{\gamma}) \right) \qquad \text{subject to} \\
&\iint \P(\bm{\Theta}, \bm{\gamma}|  \mathcal{D} ) \, (y_i ( f(\bm{X}_i) \bm{\theta} + b) - \gamma_i) \, d\bm{\Theta} d\bm{\gamma} \ge 0   \,\, \forall i = 1, \dots, n \notag
\end{flalign}
whose solution $\P(\bm{\Theta}, \bm{\gamma} | \mathcal{D}) $ is the constrained minimum relative entropy posterior. The associated MED decision rule $ \hat{y}_{i'} = \text{sgn}( \iint \P(\bm{\Theta}|\mathcal{D})  (f(\bm{x}_{i'}) \bm{\theta} + b) \,  d\bm{\Theta} ) $ is a weighted combination of discriminant functions. The minimum relative entropy posterior has the form
$$ \P(\bm{\Theta}, \bm{\gamma}| \mathcal{D}) = \frac{\P_0(\bm{\Theta}, \bm{\gamma})}{Z(\bm{\alpha})} \exp \left\{ \sum_{i=1}^n \alpha_i\left (y_i (f(\bm{X}) \bm{\theta} + b) - \gamma_i \right) \right\} $$
where $ \bm{\alpha} = [ \alpha_1, ..., \alpha_n ]^T \ge 0 $ are Lagrange multipliers that minimize the partition function $Z(\bm{\alpha})$. It is common to set the initial prior distribution to the separable form: \\ $ \P_0(\bm{\Theta}, \bm{\gamma}) = \P_0(\bm{\theta}) \P_0(b) \prod_{i=1}^n \P_0(\gamma_i )  $. If in addition, we specify that $ \P_0(\gamma_i) = C e^{-C(1-\gamma_i)} \mathcal{I}(\gamma_i \le 1) $, $ \P_0(\bm{\theta}) $ is $ N(\bm{0}, \bm{\I}) $, and $ \P_0(b) $ is a zero mean Bayesian non-informative (diffuse) prior, denoted $N(0, \infty)$, then the Lagrange multipliers can be obtained as the solution $\hat{\bm{\alpha}}$ to the constrained optimization
\begin{flalign*} 
&  \underset{\bm{\alpha}}{\max} -\frac{1}{2} \bm{\alpha}^T \bm{Y} f(\bm{X}) f(\bm{X})^T \bm{Y} \bm{\alpha} + \sum_{i=1}^n \alpha_i + \log(1-\alpha_i /C) \\
& \text{subject to } \sum_{i=1}^n y_i \alpha_i = 0 \text{ and } \alpha_1, \dots, \alpha_n \ge 0 
\end{flalign*}
where $\bm{Y} = \text{diag}(\bm{y})$. This objective function has a log barrier term $ \log(1-\alpha_i/C) $ instead of the inequality constraints $ \alpha_i  \leq C $ commonly found in the dual form of the SVM. Except in some ill-defined cases where the maximum lies near the boundary of the feasible set, the $ \hat{\alpha}_i $ will be identical to the optimal support vectors that maximize the SVM objective. The authors in \cite{NIPS1999_1733, hou} show that the \textit{maximum a posteriori} (MAP) estimator for $\bm{\theta}$ of the MED posterior is related to the Lagrange multipliers by $ \hat{\bm{\theta}} = f(\bm{X})^T \hat{\bm{\alpha}} $, so the MED posterior mode is equivalent to a maximum margin classifier. 

\subsection{Updating MED}

Under the separable prior assumptions above, the MED posterior $ \P(\bm{\Theta}, \bm{\gamma}| \mathcal{D}) $ will take the factored form $\P(\bm{\theta}| \mathcal{D}) \P(b| \mathcal{D}) \P(\bm{\gamma}) $. Due to the fact that the slack parameters $\gamma_i$ do not depend on the data $\mathcal{D}$, the density $\P(\bm{\gamma})$ does not affect the MED decision rule given after \eqref{MED_obj}. Hence only $\P(\bm{\theta}| \mathcal{D}) $ and $\P(b| \mathcal{D}$ are important. This  remaining part of the MED posterior has the form: $ \P(\bm{\theta}| \mathcal{D}) \P(b| \mathcal{D}) = N( f(\bm{X})^T \bm{Y} \bm{\alpha} , \bm{\I}) N(0, \infty)$, which is a conjugate distribution. Due to this conjugacy the posterior distribution optimizing the objective in \eqref{MED_obj} can be propagated forward in time in a recursive manner. The updating procedure is given in the following theorem and corollaries.

\begin{theorem} \label{thm:SeqMED}
Let the MED prior at $t = 1$ be $ \bm{\theta} \sim N(\bm{0}, \bm{\I}),  b \sim N(0, \infty)$, and $\emph{P}_0(\gamma_i) = C_{(1)} e^{-C_{(1)} (1-\gamma_i)} \mathcal{I}(\gamma_i \le 1) $. Then given data $ \mathcal{D}_{(\tau)}$ at time point $\tau$, the relative entropy conjugate priors are
\begin{flalign*}
& \emph{P}_0 \left(\bm{\theta} | \{ \mathcal{D}_{(t)} \}_{t=1}^{\tau-1} \right) = N\left( \sum_{t=1}^{\tau-1} f(\bm{X}_{(t)})^T \bm{Y}_{(t)}  \hat{\bm{\alpha}}_{(t)} , \bm{\I}\right) \\
& \emph{P}_0\left(b | \{ \mathcal{D}_{(t)} \}_{t=1}^{\tau-1} \right) = N(0, \infty)  \\
& \emph{P}_0(\bm{\gamma}) = \prod_{i=1}^{n_{(\tau)}} C_{(\tau)} \exp \left\{-C_{(\tau)} (1-\gamma_i) \right\} \mathcal{I}(\gamma_i \le 1)
\end{flalign*}
and the MED posterior $ \emph{P}(\bm{\Theta}| \left\{ \mathcal{D} \right\}_{t=1}^{\tau} ) $ can represented as
$$ \emph{P} \left(\bm{\theta} | \left\{ \mathcal{D} \right\}_{t=1}^{\tau} \right) = N\left( \bm{\mu}_0 + f(\bm{X}_{(\tau)})^T \bm{Y}_{(\tau)}  \hat{\bm{\alpha}}_{(\tau)}, \bm{\I} \right) $$
where $ \bm{\mu}_0 = \sum_{t=1}^{\tau-1} f(\bm{X}_{(t)})^T \bm{Y}_{(t)}  \hat{\bm{\alpha}}_{(t)} $ is the prior mean and $ \emph{P}(b | \left\{ \mathcal{D} \right\}_{t=1}^{\tau} ) $ is the same as the Bayes non-informative prior.
\end{theorem}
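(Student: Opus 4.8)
The plan is to argue by induction on $\tau$, reducing the whole statement to a single minimum-relative-entropy computation performed with a \emph{shifted} Gaussian prior mean. The base case $\tau = 1$ is exactly the setting reviewed in Section 2.1, i.e. prior mean $\bm{\mu}_0 = \bm{0}$, giving posterior $\P(\bm{\theta}|\mathcal{D}_{(1)}) = N(f(\bm{X}_{(1)})^T \bm{Y}_{(1)} \hat{\bm{\alpha}}_{(1)}, \bm{\I})$ and a diffuse posterior on $b$. For the inductive step I would assume that the posterior at time $\tau-1$ is Gaussian with mean $\bm{\mu}_0 = \sum_{t=1}^{\tau-1} f(\bm{X}_{(t)})^T \bm{Y}_{(t)} \hat{\bm{\alpha}}_{(t)}$ and identity covariance. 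Because this family is closed under the MED update (relative-entropy conjugacy, as in \cite{koyejo2013representation}), this posterior is precisely the prior at time $\tau$, which yields the three $\P_0$ expressions asserted in the theorem; the $\bm{\gamma}$ and diffuse-$b$ factors carry over unchanged.

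The central step is to solve \eqref{MED_obj} at time $\tau$ with $\P_0(\bm{\theta}) = N(\bm{\mu}_0, \bm{\I})$. I would substitute this prior into the minimum-relative-entropy posterior displayed after \eqref{MED_obj} and collect the $\bm{\theta}$-dependent terms in the exponent, obtaining $-\tfrac{1}{2}\|\bm{\theta} - \bm{\mu}_0\|^2 + \bm{v}^T\bm{\theta}$ with $\bm{v} = f(\bm{X}_{(\tau)})^T \bm{Y}_{(\tau)} \bm{\alpha}$. Completing the square shows the $\bm{\theta}$-marginal is $N(\bm{\mu}_0 + \bm{v}, \bm{\I})$, and identifying $\bm{\alpha}$ with the optimizer $\hat{\bm{\alpha}}_{(\tau)}$ gives exactly the claimed posterior mean $\bm{\mu}_0 + f(\bm{X}_{(\tau)})^T \bm{Y}_{(\tau)} \hat{\bm{\alpha}}_{(\tau)}$. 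The factors in $b$ and $\bm{\gamma}$ separate off and are treated identically to Section 2.1, since $\bm{\mu}_0$ couples only to $\bm{\theta}$; in particular the identity covariance is preserved, which is what lets the induction close.

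To pin down $\hat{\bm{\alpha}}_{(\tau)}$ I would evaluate the partition function $Z(\bm{\alpha})$ factor by factor. The $\bm{\theta}$-factor is the Gaussian moment generating function $\exp\{\bm{\mu}_0^T \bm{v} + \tfrac{1}{2}\|\bm{v}\|^2\}$, the diffuse-$b$ factor enforces $\sum_i y_i \alpha_i = 0$, and each slack integral over $\gamma_i \le 1$ gives $e^{-\alpha_i}/(1-\alpha_i/C_{(\tau)})$ (valid for $\alpha_i < C_{(\tau)}$). Minimizing $\log Z(\bm{\alpha})$ over $\bm{\alpha} \ge \bm{0}$ then reproduces the dual of Section 2.1 augmented by the single linear term $-\bm{\mu}_0^T f(\bm{X}_{(\tau)})^T \bm{Y}_{(\tau)} \bm{\alpha}$, whose minimizer defines $\hat{\bm{\alpha}}_{(\tau)}$; setting $\bm{\mu}_0 = \bm{0}$ recovers the base-case dual and confirms consistency.

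I expect the main obstacle to be precisely this partition-function calculation with a nonzero prior mean: one must verify that the Gaussian MGF contributes the cross term $\bm{\mu}_0^T \bm{v}$ in addition to the centered quadratic $\tfrac{1}{2}\|\bm{v}\|^2$, and then argue that this cross term is the \emph{only} modification to the optimization, so that the equality constraint and the $\log(1-\alpha_i/C_{(\tau)})$ barrier from Section 2.1 survive intact. The accompanying subtlety is conjugacy bookkeeping, namely confirming that the same-family form genuinely propagates so that the covariance stays $\bm{\I}$ at every time step and only the mean accumulates, thereby justifying the use of each posterior as the next prior.
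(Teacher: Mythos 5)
Your proposal is correct and follows essentially the same route as the paper: the paper's proof is exactly your inductive step written out once, taking the prior $N(\bm{\mu}_{(\tau-1)},\bm{\I})$, completing the square in $\bm{\theta}$ to get mean $\bm{\mu}_{(\tau-1)}+f(\bm{X}_{(\tau)})^T\bm{Y}_{(\tau)}\hat{\bm{\alpha}}_{(\tau)}$ with covariance $\bm{\I}$, and separating the $b$ and $\bm{\gamma}$ factors (with the diffuse $b$ posterior surviving because $\bm{y}_{(\tau)}^T\hat{\bm{\alpha}}_{(\tau)}=0$). The only organizational difference is that your partition-function computation pinning down $\hat{\bm{\alpha}}_{(\tau)}$ is deferred in the paper to the proof of Corollary 1.2, where it appears in exactly the form you describe.
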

 
Introducing the kernel function $ k(\bm{x}, \bm{x}') = \langle f(\bm{x}), f(\bm{x}')\rangle $ and the parameter transformation $ \bm{\omega} =  f(\bm{X}) \bm{\theta} $, the posterior at time $\tau>0$ can be represented in terms of this kernel. 
\begin{corollary} \label{coll:kern}
The equivalent prior at $t = 1$ for the transformed parameter is $ \bm{\omega} \sim N(\bm{0}, \bm{K}_{(1)})$ where $\bm{K}_{(1)} =  f(\bm{X}_{(1)}) f(\bm{X}_{(1)})^T $. Furthermore, the posterior at time $\tau$ is of Gaussian form \\ $ \emph{P}(\bm{\omega} | \{\mathcal{D}_{(t)}\}_{t=1}^\tau) = N(\bm{\mu}_{(\tau)}, \bm{K}_{(\tau)} ) $ where the mean parameter satisfies the recursions
$\bm{\mu}_{(\tau)} = \bm{\mu}_{(\tau-1)} + \bm{K}_{(\tau)} \bm{Y}_{(\tau)} \hat{\bm{\alpha}}_{(\tau)}  $.
\end{corollary}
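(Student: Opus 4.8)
The plan is to exploit the affine transformation property of Gaussian random vectors: if $\bm{\theta}\sim N(\bm{m},\bm{\Sigma})$ and $\bm{\omega}=\bm{A}\bm{\theta}$ for a fixed matrix $\bm{A}$, then $\bm{\omega}\sim N(\bm{A}\bm{m},\bm{A}\bm{\Sigma}\bm{A}^T)$, and this holds even when $\bm{A}$ is not invertible (one computes the pushforward law of the linear image rather than changing variables with a Jacobian, so the generally non-square map $f(\bm{X})$ causes no difficulty). Since every distribution appearing in Theorem~\ref{thm:SeqMED} is Gaussian in $\bm{\theta}$ with identity covariance, each becomes Gaussian in $\bm{\omega}=f(\bm{X})\bm{\theta}$ once we fix the appropriate design matrix $\bm{A}=f(\bm{X}_{(\tau)})$.

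First I would establish the prior claim at $t=1$. Taking $\bm{A}=f(\bm{X}_{(1)})$ and $\bm{\theta}\sim N(\bm{0},\bm{\I})$, the image $\bm{\omega}=f(\bm{X}_{(1)})\bm{\theta}$ has mean $f(\bm{X}_{(1)})\bm{0}=\bm{0}$ and covariance $f(\bm{X}_{(1)})\,\bm{\I}\,f(\bm{X}_{(1)})^T=f(\bm{X}_{(1)})f(\bm{X}_{(1)})^T$, which by the kernel identity $k(\bm{x},\bm{x}')=\langle f(\bm{x}),f(\bm{x}')\rangle$ is exactly $\bm{K}_{(1)}$. This yields $\bm{\omega}\sim N(\bm{0},\bm{K}_{(1)})$.

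Next I would push the posterior of Theorem~\ref{thm:SeqMED} through the same transformation at a general time $\tau$, taking $\bm{A}=f(\bm{X}_{(\tau)})$. The $\bm{\theta}$-posterior covariance is $\bm{\I}$, so the $\bm{\omega}$-posterior covariance is again $f(\bm{X}_{(\tau)})f(\bm{X}_{(\tau)})^T=\bm{K}_{(\tau)}$, confirming the stated covariance. For the mean, applying $\bm{A}$ to the $\bm{\theta}$-posterior mean $\bm{\mu}_0+f(\bm{X}_{(\tau)})^T\bm{Y}_{(\tau)}\hat{\bm{\alpha}}_{(\tau)}$ gives
\begin{equation*}
\bm{\mu}_{(\tau)} = f(\bm{X}_{(\tau)})\bm{\mu}_0 + \bm{K}_{(\tau)}\bm{Y}_{(\tau)}\hat{\bm{\alpha}}_{(\tau)},
\end{equation*}
using $f(\bm{X}_{(\tau)})f(\bm{X}_{(\tau)})^T=\bm{K}_{(\tau)}$ in the second term, which is precisely the new-data contribution in the recursion. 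It then remains to identify the first term $f(\bm{X}_{(\tau)})\bm{\mu}_0$ with the previous mean $\bm{\mu}_{(\tau-1)}$.

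That identification is the main obstacle, and is where I would spend most of the care. Substituting $\bm{\mu}_0=\sum_{t=1}^{\tau-1}f(\bm{X}_{(t)})^T\bm{Y}_{(t)}\hat{\bm{\alpha}}_{(t)}$ gives $f(\bm{X}_{(\tau)})\bm{\mu}_0=\sum_{t=1}^{\tau-1}f(\bm{X}_{(\tau)})f(\bm{X}_{(t)})^T\bm{Y}_{(t)}\hat{\bm{\alpha}}_{(t)}$, a sum of cross-kernel matrices $f(\bm{X}_{(\tau)})f(\bm{X}_{(t)})^T$ whose entries are $k(\bm{x}_{(\tau),i},\bm{x}_{(t),j})$. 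Because the design matrices at different times have different row dimensions, $f(\bm{X}_{(\tau)})\bm{\mu}_0$ and the earlier vector $\bm{\mu}_{(\tau-1)}$ are not literally the same coordinate vector; the recursion must be read in function space, where the posterior mean is the function $g_{(\tau-1)}(\cdot)=\sum_{t=1}^{\tau-1}\sum_j y_{(t),j}\hat{\alpha}_{(t),j}\,k(\cdot,\bm{x}_{(t),j})$ and $f(\bm{X}_{(\tau)})\bm{\mu}_0$ is $g_{(\tau-1)}$ evaluated at the current inputs $\bm{X}_{(\tau)}$. I would make this precise by induction on $\tau$: the base case gives $\bm{\mu}_{(1)}=\bm{K}_{(1)}\bm{Y}_{(1)}\hat{\bm{\alpha}}_{(1)}$ (consistent with $\bm{\mu}_{(0)}=\bm{0}$), and the inductive step uses that the accumulated mean function $g_{(\tau-1)}$ evaluated at $\bm{X}_{(\tau)}$ equals $f(\bm{X}_{(\tau)})\bm{\mu}_0$, so adding $\bm{K}_{(\tau)}\bm{Y}_{(\tau)}\hat{\bm{\alpha}}_{(\tau)}$ reproduces $\bm{\mu}_{(\tau)}$. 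Everything else reduces to routine matrix algebra and the kernel identity.
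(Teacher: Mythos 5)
Your proof is correct, but it reaches the result by a genuinely different route than the paper. The paper's proof re-runs the MED conjugate update directly in $\bm{\omega}$-space: it posits the prior $\bm{\omega}\sim N(\bm{\mu}_{(\tau-1)},\bm{K}_{(\tau)})$ with $\bm{\mu}_{(\tau-1)}=\sum_{t=1}^{\tau-1}k(\bm{X}_{(\tau)},\bm{X}_{(t)})\bm{Y}_{(t)}\hat{\bm{\alpha}}_{(t)}$, multiplies by the exponential tilt $\exp\{\hat{\bm{\alpha}}_{(\tau)}^{T}\bm{Y}_{(\tau)}\bm{\omega}\}/Z_\omega(\hat{\bm{\alpha}}_{(\tau)})$, and completes the square to land on $N(\bm{\mu}_{(\tau-1)}+\bm{K}_{(\tau)}\bm{Y}_{(\tau)}\hat{\bm{\alpha}}_{(\tau)},\bm{K}_{(\tau)})$. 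You instead treat the statement as a true corollary of Theorem~\ref{thm:SeqMED}, pushing the $\bm{\theta}$-posterior forward through $\bm{A}=f(\bm{X}_{(\tau)})$ and using $\bm{A}\bm{A}^{T}=\bm{K}_{(\tau)}$. The two coincide because the MED constraint depends on $\bm{\theta}$ only through $\bm{\omega}=f(\bm{X}_{(\tau)})\bm{\theta}$, so the exponential tilting commutes with the pushforward; it would strengthen your write-up to state that one sentence explicitly, since for a generic non-injective linear image a KL projection and a pushforward need not commute. Your route buys economy (no second completing-the-square computation) and makes explicit a point the paper glosses over: the symbol $\bm{\mu}_{(\tau-1)}$ in the recursion is not the time-$(\tau-1)$ posterior mean vector itself but the accumulated mean function $\sum_{t<\tau}\sum_j y_{(t)j}\hat{\alpha}_{(t)j}\,k(\cdot,\bm{x}_{(t)j})$ re-evaluated at the new inputs $\bm{X}_{(\tau)}$ --- exactly the cross-kernel sum the paper silently substitutes when it redefines $\bm{\mu}_{(\tau-1)}$ inside its proof. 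What the paper's direct computation buys in return is the explicit partition function $Z_\omega(\bm{\alpha}_{(\tau)})$, which is then reused in the proof of Corollary~\ref{coll:Opt_Lagrange} to derive the dual objective for the Lagrange multipliers; your argument would need that computation separately.
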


Since $\P(\bm{\theta}| \{ \mathcal{D}_{(t)} \}_{t=1}^{\tau}) $ is Gaussian, the MAP estimator is simply the mean parameter $\bm{\mu}_{(\tau)}$ given in the Corollary \ref{coll:kern}. Thus the decision rule reduces to $ \hat{y}_{i'} = \text{sgn}(f(\bm{x}_{i'}) \hat{\bm{\theta}} + \hat{b}) $ where the MAP estimator $ \hat{\bm{\theta}}  $ is a function of the previously estimated Lagrange multipliers $\hat{\bm{\alpha}}_{(1)}, \dots, \hat{\bm{\alpha}}_{(\tau-1)}$ and the maximizing values $\hat{\bm{\alpha}}_{(\tau)}$ and $\hat{b}$ for the current time point $\tau$.

\begin{corollary} \label{coll:Opt_Lagrange}
Given all previous $\hat{\bm{\alpha}}_{(1)}, \dots, \hat{\bm{\alpha}}_{(\tau-1)}$, the current optimal Lagrange multipliers $\hat{\bm{\alpha}}_{(\tau)} $ are the solution to
\begin{flalign*} 
&  \underset{\bm{\alpha}_{(\tau)}}{\max} \, -\frac{1}{2} \bm{\alpha}_{(\tau)}^T \bm{Y}_{(\tau)} \bm{K}_{(\tau)} \bm{Y}_{(\tau)} \bm{\alpha}_{(\tau)} + \sum_{i=1}^{n_{(\tau)}} \log ( 1-\alpha_{(\tau) i}/ C_{(\tau)}) \\
& \hspace{34pt} + \bm{\alpha}_{(\tau)}^T \left(\bm{1} - \bm{Y}_{(\tau)} \sum_{t=1}^{\tau-1} k(\bm{X}_{(\tau)} , \bm{X}_{(t)}) \bm{Y}_{(t)} \hat{\bm{\alpha}}_{(t)} \right)  \\
& \text{subject to }\bm{y}_{(\tau)}^T  \bm{\alpha}_{(\tau)} = 0 \text{ and } \alpha_{(\tau) i} \, \ge 0 \text{ for all } i = 1, \dots, n_{(\tau)} 
\end{flalign*}
and, holding the Lagrange multipliers fixed, the optimal bias $\hat{b} = $
$$
\underset{b}{\arg\min} \hspace{-6pt} \sum_{s \in \{i | \hat{\alpha}_{(\tau) i} \neq 0 \} } \left\vert \left(y_{(\tau) s} - \sum_{t=1}^{\tau} k(\bm{X}_{(\tau) s} , \bm{X}_{(t)} ) \bm{Y}_{(t)}  \hat{\bm{\alpha}}_{(t)}  \right) - b \right\vert
$$
ensures that the expectation constraints in the objective hold. 
\end{corollary}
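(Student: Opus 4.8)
The plan is to derive both claims from the explicit log-partition function of the MED posterior together with the Karush--Kuhn--Tucker (KKT) optimality conditions. Recall that the minimum-relative-entropy posterior has the log-linear form displayed after \eqref{MED_obj}, so the optimal Lagrange multipliers $\hat{\bm{\alpha}}_{(\tau)}$ are precisely the minimizers of $\log Z(\bm{\alpha}_{(\tau)})$ subject to $\alpha_{(\tau)i}\ge 0$. The bias, whose posterior is the diffuse prior and is therefore not pinned down by the posterior itself, will instead be recovered from the activated expectation constraints. I treat the two parts in turn.

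For the first part, I would compute $Z(\bm{\alpha}_{(\tau)})$ by integrating the conjugate prior of Theorem~\ref{thm:SeqMED}, tilted by $\exp\{\sum_i \alpha_{(\tau)i}(y_{(\tau)i}(f(\bm{X}_{(\tau)})\bm{\theta} + b)-\gamma_i)\}$, exploiting the separability of that prior so the integral factors over $\bm{\theta}$, $b$, and the $\gamma_i$. The $\bm{\theta}$-integral is Gaussian with mean $\bm{\mu}_0=\sum_{t=1}^{\tau-1} f(\bm{X}_{(t)})^T\bm{Y}_{(t)}\hat{\bm{\alpha}}_{(t)}$ and identity covariance; completing the square in $\bm{v} = f(\bm{X}_{(\tau)})^T\bm{Y}_{(\tau)}\bm{\alpha}_{(\tau)}$ produces the quadratic term $\tfrac12\bm{\alpha}_{(\tau)}^T\bm{Y}_{(\tau)}\bm{K}_{(\tau)}\bm{Y}_{(\tau)}\bm{\alpha}_{(\tau)}$ together with the linear cross-term $\bm{v}^T\bm{\mu}_0$, which after substituting the kernel identity $f(\bm{X}_{(\tau)})f(\bm{X}_{(t)})^T = k(\bm{X}_{(\tau)},\bm{X}_{(t)})$ becomes $\bm{\alpha}_{(\tau)}^T\bm{Y}_{(\tau)}\sum_{t=1}^{\tau-1}k(\bm{X}_{(\tau)},\bm{X}_{(t)})\bm{Y}_{(t)}\hat{\bm{\alpha}}_{(t)}$. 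The diffuse $b$-integral diverges unless the coefficient of $b$ vanishes, which enforces $\bm{y}_{(\tau)}^T\bm{\alpha}_{(\tau)}=0$. Finally each truncated-exponential $\gamma_i$-integral evaluates, after the change of variable $u=1-\gamma_i$, to a factor whose logarithm contributes $-\alpha_{(\tau)i}-\log(1-\alpha_{(\tau)i}/C_{(\tau)})$. Collecting these into $\log Z$ and flipping the sign to maximize $-\log Z$ reproduces the stated objective, with $\sum_i\alpha_{(\tau)i}$ and the cross-term combining into $\bm{\alpha}_{(\tau)}^T(\bm{1} - \bm{Y}_{(\tau)}\sum_t k(\cdots)\bm{Y}_{(t)}\hat{\bm{\alpha}}_{(t)})$.

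For the second part, I would invoke complementary slackness: for every index $s$ with $\hat{\alpha}_{(\tau)s}\neq 0$ the corresponding expectation constraint is active and therefore holds with equality, $\E[y_{(\tau)s}(f(\bm{X}_{(\tau)s})\bm{\theta}+b)-\gamma_s]=0$. Replacing $\E[\bm{\theta}]$ by the posterior mean $\hat{\bm{\theta}}=\sum_{t=1}^\tau f(\bm{X}_{(t)})^T\bm{Y}_{(t)}\hat{\bm{\alpha}}_{(t)}$, writing $f(\bm{X}_{(\tau)s})\hat{\bm{\theta}}=\sum_{t=1}^\tau k(\bm{X}_{(\tau)s},\bm{X}_{(t)})\bm{Y}_{(t)}\hat{\bm{\alpha}}_{(t)}$, and using $y_{(\tau)s}\in\{\pm1\}$, each active support vector yields one scalar equation $b = y_{(\tau)s}-\sum_{t=1}^\tau k(\bm{X}_{(\tau)s},\bm{X}_{(t)})\bm{Y}_{(t)}\hat{\bm{\alpha}}_{(t)}$. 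Because these are generically inconsistent across the several support vectors (the soft-margin slack prevents all margins from being met exactly), no single $b$ satisfies them all; choosing $\hat{b}$ to minimize the sum of absolute residuals, equivalently the median of the right-hand sides, is the robust value that best enforces the expectation constraints, which is exactly the displayed $\arg\min_b$.

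The routine part is the factorized integration of the first part; the genuinely new ingredient relative to the single-batch MED derivation is the non-zero prior mean $\bm{\mu}_0$, and the main care is in tracking the linear cross-term it introduces and confirming it assembles into the claimed kernel sum. In the second part the main subtlety is the treatment of $\E[\gamma_s]$: the truncated-exponential posterior gives $\E[\gamma_s]=1-1/(C_{(\tau)}-\hat{\alpha}_{(\tau)s})$ rather than exactly $1$, so the displayed formula is the large-$C_{(\tau)}$, SVM-style normalization, and I would either state this approximation explicitly or carry the exact offset through the median computation.
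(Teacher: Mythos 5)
Your derivation of the dual objective is essentially the paper's own proof: factor $Z(\bm{\alpha}_{(\tau)})$ into $Z_\theta Z_b Z_\gamma$, evaluate the Gaussian, diffuse-Gaussian, and truncated-exponential integrals, and read off the quadratic kernel term, the equality constraint forced as $\sigma^2 \rightarrow \infty$, and the $\alpha_{(\tau)i} + \log(1-\alpha_{(\tau)i}/C_{(\tau)})$ terms, with the cross-term from the nonzero prior mean assembling into the weights $\bm{1} - \bm{Y}_{(\tau)}\sum_{t=1}^{\tau-1}k(\bm{X}_{(\tau)},\bm{X}_{(t)})\bm{Y}_{(t)}\hat{\bm{\alpha}}_{(t)}$ exactly as you describe. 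The paper's appendix never actually proves the bias formula, so your complementary-slackness/median argument supplies a step the authors omit, and your caveat that the posterior gives $\E[\gamma_s] = 1 - 1/(C_{(\tau)}-\hat{\alpha}_{(\tau)s})$ rather than exactly $1$ (so the displayed $\hat{b}$ is the large-$C_{(\tau)}$, SVM-style normalization) is a legitimate point the paper glosses over.
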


The above dual formulation for the Lagrange multipliers $ \bm{\alpha}_{(\tau)}$ has some interesting implications. Since the Lagrange multipliers from the previous time points are fixed at time step $\tau$, the factor $ {\bm{1} - \bm{Y}_{(\tau)} \sum_{t=1}^{\tau-1} k(\bm{X}_{(\tau)} , \bm{X}_{(t)}) \bm{Y}_{(t)} \hat{\bm{\alpha}}_{(t)} }$ are constants and can be thought of as (unnormalized) weights for $ \bm{\alpha}_{(\tau)}$, the Lagrange multipliers from the current time point. Thus the corresponding Lagrange multipliers for samples that are easily predicted using only the prior information will have lower weight than the Lagrange multipliers for samples that are difficult or incorrect.  

\section{Manifold Regularization}

Next we consider the case wheres some of the labels are missing. Without loss of generality we will assume the first $l$ points are labeled and the latter $n-l$ points are unlabeled.

We will adopt the semi-supervised MED classification framework of \cite{hou}, called Laplacian MED (LapMED). LapMED introduces an additional ``geometric" constraint
\begin{flalign} \label{laplace_const}
\hspace{-5pt} \iint \P(\bm{\theta}, \lambda) \left( \int_{x \in \mathcal{M}} \hspace{-2pt} \bm{\theta}^T \hspace{-2pt} f(\bm{x}) \Delta_{\mathcal{M}} f(\bm{x}) \bm{\theta}  \, d\mathcal{P}_x - \lambda \right) d\bm{\theta} d\lambda \leq 0
\end{flalign}
to \eqref{MED_obj} where $ \mathcal{M} = \text{supp}(\mathcal{P}_X ) \subset \mathbb{R}^n $ is a compact submanifold, $ \Delta_{\mathcal{M}} $ is the Laplace-Beltrami operator on $\mathcal{M}$,  and $ \lambda $ controls the complexity of the decision boundary in the intrinsic geometry of $ \mathcal{P}_X $. This constraint was motivated by the semi-supervised framework of \cite{Belkin:2006:MRG:1248547.1248632} to encourage the function $f(x)$ to be smooth over the support set of the feature distribution $\mathcal{P}_X$, inducing a geometric interpolation of unlabeled points. Since the marginal distribution is unknown, from \cite{grigor2006heat}
$$
f(\bm{X})^T \bm{L} f(\bm{X}) \rightarrow \int_{x \in \mathcal{M}} f(\bm{x})  \Delta_{\mathcal{M}} f(\bm{x}) \, d\mathcal{P}_x,  \, \text{ as } n\rightarrow \infty
$$
where $ \bm{L} $ is the normalized graph Laplacian formed with a heat kernel. The LapMED posterior can be approximated as
\begin{flalign*} 
& \P(\bm{\theta}, b, \bm{\gamma}, \lambda | \mathcal{D}) = \frac{\P_0(\bm{\theta}, b, \bm{\gamma}, \lambda)}{Z(\bm{\alpha}, \beta)}  \exp \Bigg\{ \\
& \sum_{i=1}^l \alpha_i \left( y_i ( f(\bm{X}) \bm{\theta} + b) - \gamma_i \right) + \beta \left(\lambda - \bm{\theta}^T f(\bm{X})^T \bm{L} f(\bm{X}) \bm{\theta}  \right) \Bigg\} 
\end{flalign*}
where $ \beta \ge 0 $ is a Lagrange multiplier for the smoothness constraint. 

\subsection{Sequential Laplacian MED}

The distribution $ \P(\bm{\Theta}, \bm{\gamma}, \lambda| \mathcal{D}) $ that minimizes the objective with the additional constraint \eqref{laplace_const} can similarly be factorized and, like the distribution of slack parameters considered in Section 2, the distribution of the smoothness parameter $\lambda$ is also independent of the data $\mathcal{D}$.  Likewise, the distribution of the decision rule coefficients $ \P(\bm{\Theta} | \mathcal{D} )$  are conjugate distributions with their priors. Thus the updating procedure for the LapMED problem is similar to the updating procedure in Section \ref{sec:SeqMED}.

\begin{theorem} \label{thm:SeqLapMED}
At $t = 0$, the MED priors for $\bm{\theta}$ (or $\bm{\omega}$), $b$, and $\gamma_i$ are the same as in Theorem 1, and the prior for $\lambda$ is a Bayesian zero mean point prior, denoted $Exp.(\infty)$.
Then given data $\mathcal{D}_{(\tau)}$ at time point $\tau$, the MED conjugate prior and posterior are still $Exp.(\infty)$ for $\lambda$, the same as in Theorem 1 for $b$ and $\gamma_i$, and Gaussian of form $N \left(\bm{\mu}_{(\tau)}, \bm{\Sigma}_{(\tau)} \right)$ for $\bm{\theta}$ (or $\bm{\omega}$). Define a $ l \times n $ expansion matrix as  $\bm{J} = [ \bm{\I} \,\, \bm{0} ] $. Then the mean and covariance parameters for the distribution of $\bm{\theta}$ are
$$ 
\bm{\mu}_{(\tau)} = \bm{G}_{(\tau)}^{-1} \sum_{t=1}^{\tau} f(\bm{X}_{(t)})^T \bm{J}^T  \bm{Y}_{(t)}  \hat{\bm{\alpha}}_{(t)}, \quad \bm{\Sigma}_{(\tau)} = \bm{G}_{(\tau)}^{-1}, 
$$
where $ \bm{G}_{(\tau)} = \bm{G}_{(\tau-1)} + 2\beta_{(\tau)} f(\bm{X}_{(\tau)})^T \bm{L}_{(\tau)} f(\bm{X}_{(\tau)}) $ is a recursive graph of vertex disjoint subgraphs, and for the distribution of $\bm{\omega}$ are
$$
\bm{\mu}_{(\tau)} = \hspace{-2pt} \sum_{t=1}^{\tau} k_{(\tau)} \hspace{-2pt} \left( \bm{X}_{(\tau)}, \bm{X}_{(t)} \right) \hspace{-1pt} \bm{J}^T \bm{Y}_{(t)}  \hat{\bm{\alpha}}_{(t)}, \, \bm{\Sigma}_{(\tau)} = k_{(\tau)} \hspace{-2pt} \left( \bm{X}_{(\tau)}, \bm{X}_{(\tau)} \right) 
$$
where $k_{(\tau)}( \bm{x}, \bm{x}') = \langle f(\bm{x}), \bm{G}_{(\tau)}^{-1} f(\bm{x}') \rangle$ is a kernel function that can be recursively defined as
\begin{flalign} \label{kern_func} 
& k_{(\tau)}( \bm{x}, \bm{x}') = k_{(\tau-1)}(\bm{x}, \bm{x}') - k_{(\tau-1)}(\bm{x}, \bm{X}_{(\tau)}) \bigg( \hspace{-2pt} \left(2 \beta_{(\tau)}\bm{L}_{(\tau)}\right)^{-1} \notag \\
& \hspace{42pt} + k_{(\tau-1)} \left(\bm{X}_{(\tau)}, \bm{X}_{(\tau)} \right) \hspace{-2pt} \bigg)^{-1} \hspace{-2pt} k_{(\tau-1)}( \bm{X}_{(\tau)}, \bm{x}') .
\end{flalign}
\end{theorem}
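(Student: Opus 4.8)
The plan is to mirror the conjugacy argument of Theorem~\ref{thm:SeqMED}, but now to track both a shifting mean and a shrinking covariance, since the geometric constraint~\eqref{laplace_const} contributes a genuine quadratic form in $\bm{\theta}$ to the exponent of the LapMED posterior. First I would assume inductively that the prior at time $\tau$ is $\bm{\theta} \sim N(\bm{\mu}_{(\tau-1)}, \bm{G}_{(\tau-1)}^{-1})$ and substitute it into the LapMED posterior displayed just before the theorem. Collecting terms in the exponent, the margin constraints contribute the linear term $\bm{\theta}^T f(\bm{X}_{(\tau)})^T \bm{J}^T \bm{Y}_{(\tau)} \bm{\alpha}_{(\tau)}$ (the expansion matrix $\bm{J}$ appearing precisely because only the $l_{(\tau)}$ labeled samples enter the margin sum), while the smoothness constraint contributes the quadratic term $-\beta_{(\tau)}\,\bm{\theta}^T f(\bm{X}_{(\tau)})^T \bm{L}_{(\tau)} f(\bm{X}_{(\tau)}) \bm{\theta}$. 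Since the prior is Gaussian and these new contributions are at most quadratic in $\bm{\theta}$, the posterior stays Gaussian, which establishes conjugacy; the factors for $b$, $\bm{\gamma}$, and $\lambda$ separate out exactly as in Theorem~\ref{thm:SeqMED} because none of them couples to the quadratic $\bm{\theta}$ term, so their priors and posteriors are unchanged.

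Next I would complete the square in $\bm{\theta}$ to read off the updated natural parameters. The quadratic coefficient gives the precision recursion $\bm{G}_{(\tau)} = \bm{G}_{(\tau-1)} + 2\beta_{(\tau)} f(\bm{X}_{(\tau)})^T \bm{L}_{(\tau)} f(\bm{X}_{(\tau)})$, hence $\bm{\Sigma}_{(\tau)} = \bm{G}_{(\tau)}^{-1}$, and the linear coefficient gives $\bm{G}_{(\tau)} \bm{\mu}_{(\tau)} = \bm{G}_{(\tau-1)} \bm{\mu}_{(\tau-1)} + f(\bm{X}_{(\tau)})^T \bm{J}^T \bm{Y}_{(\tau)} \hat{\bm{\alpha}}_{(\tau)}$. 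Unrolling this telescoping recursion, with base case $\bm{G}_{(0)}\bm{\mu}_{(0)} = \bm{0}$ coming from the $N(\bm{0}, \bm{\I})$ prior, yields the closed form $\bm{\mu}_{(\tau)} = \bm{G}_{(\tau)}^{-1} \sum_{t=1}^{\tau} f(\bm{X}_{(t)})^T \bm{J}^T \bm{Y}_{(t)} \hat{\bm{\alpha}}_{(t)}$ asserted in the theorem. The \emph{vertex disjoint subgraphs} remark then follows by noting that each increment $\bm{L}_{(t)}$ is formed only from $\bm{X}_{(t)}$, so the accumulated operator $\bm{G}_{(\tau)}$ is a sum of per-time-point graph Laplacians carrying no edges across time points.

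For the kernelized statements I would apply the transformation $\bm{\omega} = f(\bm{X})\bm{\theta}$, which maps the Gaussian for $\bm{\theta}$ into a Gaussian with mean $f(\bm{X}_{(\tau)})\bm{\mu}_{(\tau)}$ and covariance $f(\bm{X}_{(\tau)}) \bm{G}_{(\tau)}^{-1} f(\bm{X}_{(\tau)})^T$; rewriting each bilinear form $f(\cdot)\bm{G}_{(\tau)}^{-1} f(\cdot)^T$ as $k_{(\tau)}(\cdot,\cdot)$ immediately produces the stated $\bm{\mu}_{(\tau)}$ and $\bm{\Sigma}_{(\tau)}$ in kernel form. The remaining task is the recursion~\eqref{kern_func}, which I would obtain by applying the Woodbury matrix identity to $\bm{G}_{(\tau)}^{-1} = \big( \bm{G}_{(\tau-1)} + f(\bm{X}_{(\tau)})^T (2\beta_{(\tau)}\bm{L}_{(\tau)}) f(\bm{X}_{(\tau)}) \big)^{-1}$, sandwiching the resulting expression between $f(\bm{x})$ and $f(\bm{x}')^T$, and then recognizing every surviving factor $f(\cdot)\bm{G}_{(\tau-1)}^{-1} f(\cdot)^T$ as a $k_{(\tau-1)}$ evaluation. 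The main obstacle here is bookkeeping rather than conceptual: one must keep the dimensions of $\bm{J}$, $\bm{Y}_{(t)}$, and $\bm{L}_{(\tau)}$ consistent when only a subset of the samples is labeled, and one must check that $(2\beta_{(\tau)}\bm{L}_{(\tau)})^{-1}$ is well defined, or else interpret it as a pseudoinverse, so that the Woodbury step remains valid; once those consistency checks are in place, matching the kernel blocks to~\eqref{kern_func} is mechanical.
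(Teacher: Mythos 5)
Your proposal is correct and follows essentially the same route as the paper: factorize the posterior so that $b$, $\bm{\gamma}$, and $\lambda$ separate off unchanged, complete the square in $\bm{\theta}$ to obtain the precision recursion $\bm{G}_{(\tau)} = \bm{G}_{(\tau-1)} + 2\beta_{(\tau)} f(\bm{X}_{(\tau)})^T \bm{L}_{(\tau)} f(\bm{X}_{(\tau)})$ and the telescoped mean, and recover the kernel recursion \eqref{kern_func} from a Woodbury inversion of $\bm{G}_{(\tau)}$. The only cosmetic difference is that the paper completes the square directly in $\bm{\omega}$-space with the kernelized prior rather than pushing the $\bm{\theta}$ Gaussian forward through $f(\bm{X}_{(\tau)})$, and your remark about interpreting $(2\beta_{(\tau)}\bm{L}_{(\tau)})^{-1}$ as a pseudoinverse is a legitimate caveat the paper leaves implicit.
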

 
 Theorem 2 gives the posterior distribution for semi-supervised classification whose form is comparable to the form given in Corollary \ref{coll:kern} for the supervised case. Indeed the forms are identical except for the presence of the precision matrix term $G_{(\tau)}$ in the semi-supervised case. As the sparsity of $G_{(\tau)}$ is associated with the graph Laplacian, the kernel function of the semi-supervised case is a regularized version of the kernel function that appears in Corallary \ref{coll:kern}. If we let $ \beta_{(t)} $ be a fixed parameter, then $\hat{\bm{\alpha}}_{(t)}$ and $\hat{b}$ optimize an objective of the same form as in Corollary \ref{coll:Opt_Lagrange}, but with kernel function $k_{(\tau)}( \bm{x}, \bm{x}')$. If $ \beta_{(t)} $ is chosen to be 0, the sequential LapMED simply ignores the unlabeled data of time point $t$, and if all $\beta_{(i)}$'s are $0$, then the unlabeled data is always ignored and the updating procedure is exactly the same as in the supervised scenario. These parameters are functions of the $\gamma_A$ and $\gamma_I$, which are identical to the penalty parameters in the Laplacian SVM \cite{Belkin:2006:MRG:1248547.1248632}, associated with the reproducing kernel Hilbert space and data distribution respectively: $ C_{(t)}  = \frac{1}{2 l_{(t)} \gamma_A } $ and $ \beta_{(t)}  = \frac{\gamma_I }{2 \gamma_A n_{(t)}^2} $.

\subsection{Approximating the Kernel Function} \label{approx_k_func}

Because the kernel function in \eqref{kern_func} is a function of the previous kernel functions, calculating a map to its associated Hilbert space $ \mathcal{H}_{(\tau)} $ can be computationally expensive. Thus in this subsection, we derive an approximation to the map to $ \langle \bm{x}, \bm{x}' \rangle_{\mathcal{H}_{(\tau)}}  $, which is computationally easier than direct recursive calculation.

Recall that we approximate the constraint in \eqref{laplace_const}, at any time point $t$, empirically with the graph Laplacian $\bm{L}_{(t)}$ formed using the data from that time point $\bm{X}_{(t)}$. However, the non-empirical constraint using the Laplace-Beltrami operator over the unknown marginal distribution $\mathcal{P}_x$, is actually the same at every time point. Thus as $n_{(\tau-1)} \rightarrow \infty$,  the prior graph $\bm{G}_{(\tau-1)}$ converges to 
\begin{flalign} \label{decomp_Lap} 
B \int_{x \in \mathcal{M}} f(\bm{x}) \Delta_{\mathcal{M}} f(\bm{x}) \, d\mathcal{P}_x \approx B \sum_{i=1}^{\infty} \delta_i \xi_i^2 \upsilon_i(z) \upsilon_i(z)
\end{flalign}
where $B = 2 \sum_{t=1}^{\tau} \beta_{(t)} $, $\delta_i $ are the eigenvalues of the Laplace-Beltrami operator, and $\upsilon_i(z)$ and $\xi_i$ are the infinite sequence of right singular functions and singular values of $ f(x) = \int k(x, z) f(z) \, dz $. The approximate decomposition arises since the left singular functions of $f$ are the eigenfunctions of the Laplace-Beltrami operator \cite{Lederman} and \cite{Belkin:2006:MRG:1248547.1248632}. Thus instead of empirically approximating the Laplacian as a sum of subgraphs \\ $\bm{G}_{(\tau-1)} = \bm{\I} + \sum_{t=1}^{\tau-1} 2\beta_{(t)} f(\bm{X}_{(t)})^T \bm{L}_{(t)} f(\bm{X}_{(t)}) $, we can instead implement approximations to the eigen/singular values and singular functions in \eqref{decomp_Lap}. 

Assuming that the sample size $n$ is large enough, the average eigenvalues of the $\tau-1$ graph Laplacians would be a good estimator for the eigenvalues of the Laplace-Beltrami operator. Additionally the rows of the matix $\bm{V}^T$ from the singular value decomposition of $\bm{X}$ will contain the basis for its row space. Thus because the right singular functions form an orthonormal basis for the coimage of $f$, if the mapping approximately preserves the basis, the mapped average singular vectors $f(\bar{\bm{V}}_i)$ would be good estimators for the right singular functions $\upsilon_i(z)$ and correspondingly so for the singular values.

The posterior kernel function $k_{(\tau)}( \bm{x}, \bm{x}')$ using an approximation to the decomposition in \eqref{decomp_Lap} will no longer be a recursive function of prior kernel functions $k_{(\tau-1)}( \bm{x}, \bm{x}')$ that have the same form, like in \eqref{kern_func}. Instead for $\tau > 2$, it uses a prior kernel function 
\begin{flalign*} 
& \tilde{k}_{(\tau-1)}(\bm{x}, \bm{x}')  = k(\bm{x}, \bm{x}') - k(\bm{x}, \bar{\bm{V}}_{(\tau-1)}) \bigg( \hspace{-2pt} \frac{\text{ diag}(\bar{\bm{s}}_{(\tau-1)}^{\,2} \bar{\bm{d}}_{(\tau-1)})^{-1}}{B} \\
& \hspace{52pt} + k (\bar{\bm{V}}_{(\tau-1)}, \bar{\bm{V}}_{(\tau-1)}) \bigg)^{-1} \hspace{-2pt} k( \bar{\bm{V}}_{(\tau-1)}, \bm{x}') .
\end{flalign*}
where $ k(\bm{x}, \bm{x}') = \langle f(\bm{x}), f(\bm{x}')\rangle $ is the non-regularized kernel function. So at time $\tau$, the singular vectors of $\bm{X}_{(\tau-1)}$ are used to update the average singular vectors, in the above function, through
$$ \bar{\bm{V}}_{(\tau-1)} = \bar{\bm{V}}_{(\tau-2)} + \frac{\bm{V}_{(\tau-1)} - \bar{\bm{V}}_{(\tau-2)}}{\tau-1} $$ and similarly so for the average corresponding singular values $ \bar{\bm{s}}_{(\tau-1)}$ and the average eigenvalues of the graph Laplacians $ \bar{\bm{d}}_{(\tau-1)}$.

\section{Experiments}

In this section, we compare the proposed sequential maximum margin classifiers to popular supervised and semi-supervised maximum margin classifiers (SVM \cite{smola1998connection} and LapSVM \cite{Belkin:2006:MRG:1248547.1248632}) where the model is trained using just the current time points data and where the model has been re-trained on all previous data. The former type of model is a lower bound on performance since it ignores all previous data and the latter type of model is an upper bound since it is re-trained on all previous data at every time point. Note the MED and SVM models only differ by a weak log-barrier term in the objective function making their performance identical, and similarly so for LapMED and LapSVM. Thus their performance curves will referred to as Full SVM/MED and Full LapSVM/LapMED.

\subsection{Simulations}

In both of the following simulations, the models receive roughly 100 samples ($n_{(t)} = [97, 103] $) at every time point, the parameters are empirically chosen with a validation set, and then the models are tested on an independent data set of 1000 test points. The test accuracy $\frac{TP + TN}{1000}$ is the average accuracy over 100 trials of simulation.

In the first simulation, we generate data from 200 categorical distributions where 100 of the variables are sparse so they have high probability of being 0, another 50 of the variables have lower probability of being 0, and the final 50 variables are used to distinguish between the two classes. We use the term frequency - inverse document frequency (TF-IDF) kernel of \cite{elkan2005deriving}, which is used in document processing and topic models. Figure \ref{fig:super} shows that the accuracy of the sequential model (SeqMED) improves as the model is updated with more training data and has much better results even after one model update versus the independent model (SVM) that ignores previous training data. Of course the sequential model does not improve as rapidly as the model that is re-trained on all the data (Full SVM/MED), but this is the price paid for lower computational complexity. For example, at $t = 30$, SeqMED updates and fits 100 coefficients for the new data whereas Full SVM/MED fits 3,000 coefficients for all the data.

\begin{figure}[htb]
	\begin{minipage}[b]{1.0\linewidth}
		\centering
		\centerline{\includegraphics[width=8.5cm]{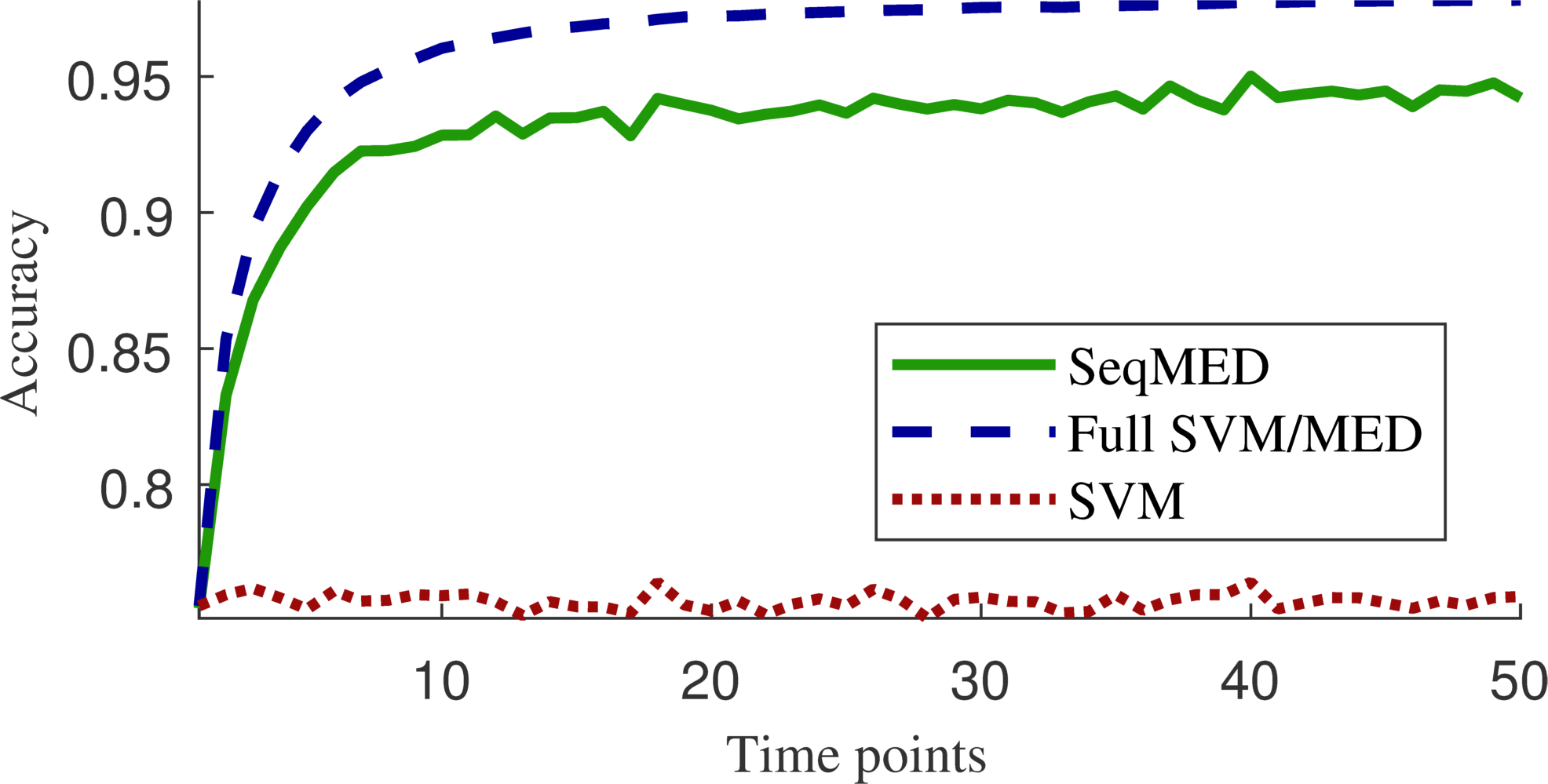}}
	\end{minipage}
		\vspace{-20pt}
	\caption{Accuracy of prediction for categorical fully labeled simulated data. The proposed sequential MED (SeqMED) classifier performs almost as well as the full batch implementation of the SVM/MED (Full SVM/MED). }
	\label{fig:super}
\end{figure}

In the second simulation, we generate data from the interior of a 3-dimensional sphere where one class is roughly at the center of the sphere and the other class is on the shell, but only 10\% of the samples are labeled. We use a rbf kernel with width 1 for the kernel function and a heat kernel with width 0.01 and a 20 nearest neighbors graph for the graph Laplacian. Figure \ref{fig:semisuper} shows improvement in performance of the sequential model similar to in Figure \ref{fig:super}. We use the approximate kernel function of Subsection \ref{approx_k_func} to perform each update, establishing that the approximation is adequate.

\begin{figure}[htb]
	\begin{minipage}[b]{1.0\linewidth}
		\centering
		\centerline{\includegraphics[width=8.5cm]{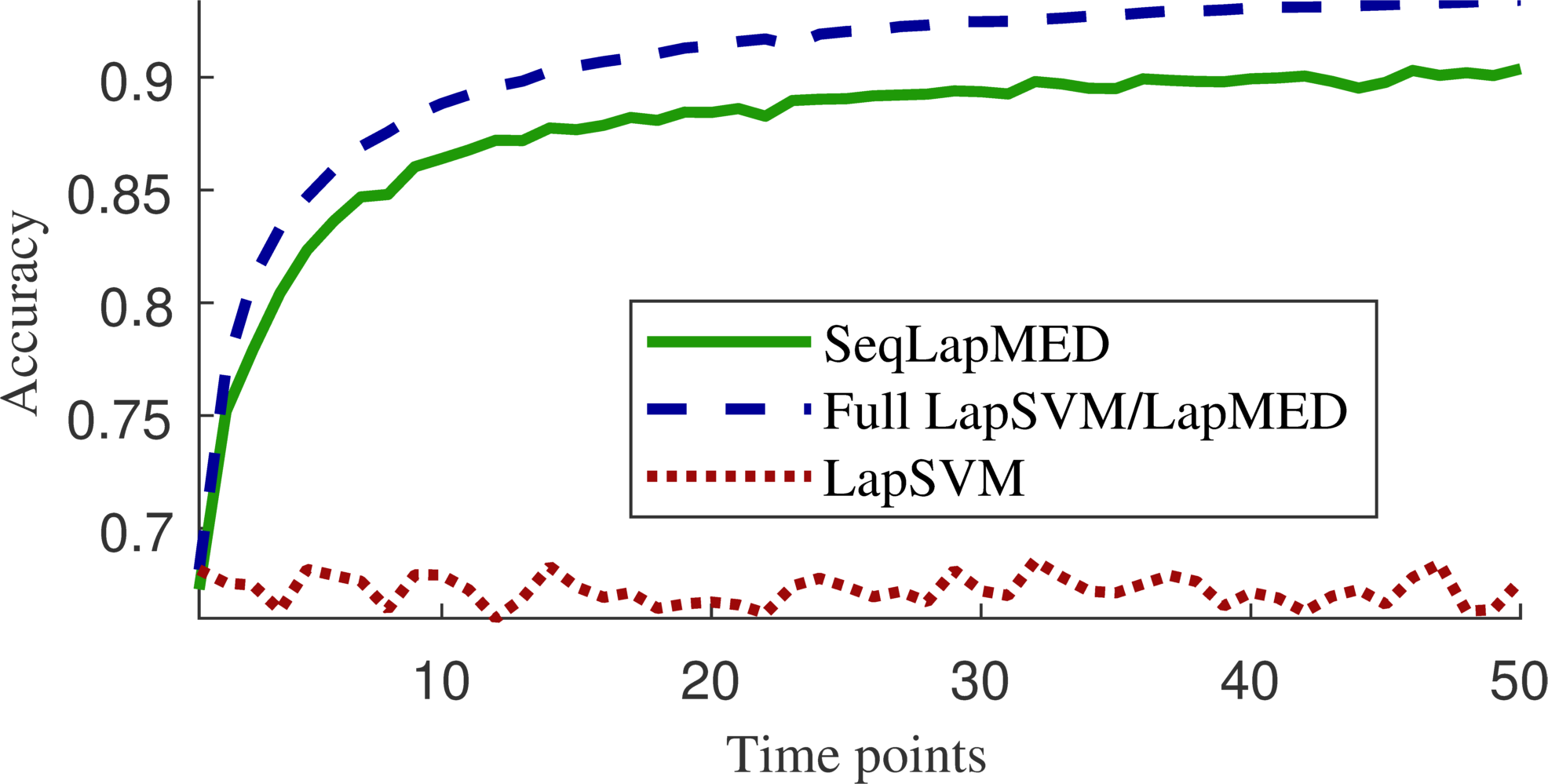}}
	\end{minipage}
	\vspace{-20pt}
	\caption{Accuracy of prediction for continuous simulated data with 10\% labeled.}
	\label{fig:semisuper}
\end{figure}

\subsection{Data}

We compare the proposed algorithms on the Isolet speech database from the UCI machine learning repository \cite{Lichman:2013} following the experimental framework used in \cite{Belkin:2006:MRG:1248547.1248632}. To train the models, we take the entire training set of 120 speakers (isolet1 - isolet4) and break them into 24 groups (time points) of 5 speakers where only the first speaker is labeled. At each time point, the models train on 260 samples ($t=21$ and $23$ only have 259) where 52 of the samples are labeled. The parameters are set in the same way as in \cite{Belkin:2006:MRG:1248547.1248632} and the test set is similarly composed of the 1,559 samples from isolet5. Figure \ref{fig:data} shows that, after two time points, the sequential model always performs better than the model that ignores previous data, and comes close to performing as well as the fully re-trained model as time progresses.

\begin{figure}[htb]
	\begin{minipage}[b]{1.0\linewidth}
		\centering
		\centerline{\includegraphics[width=8.5cm]{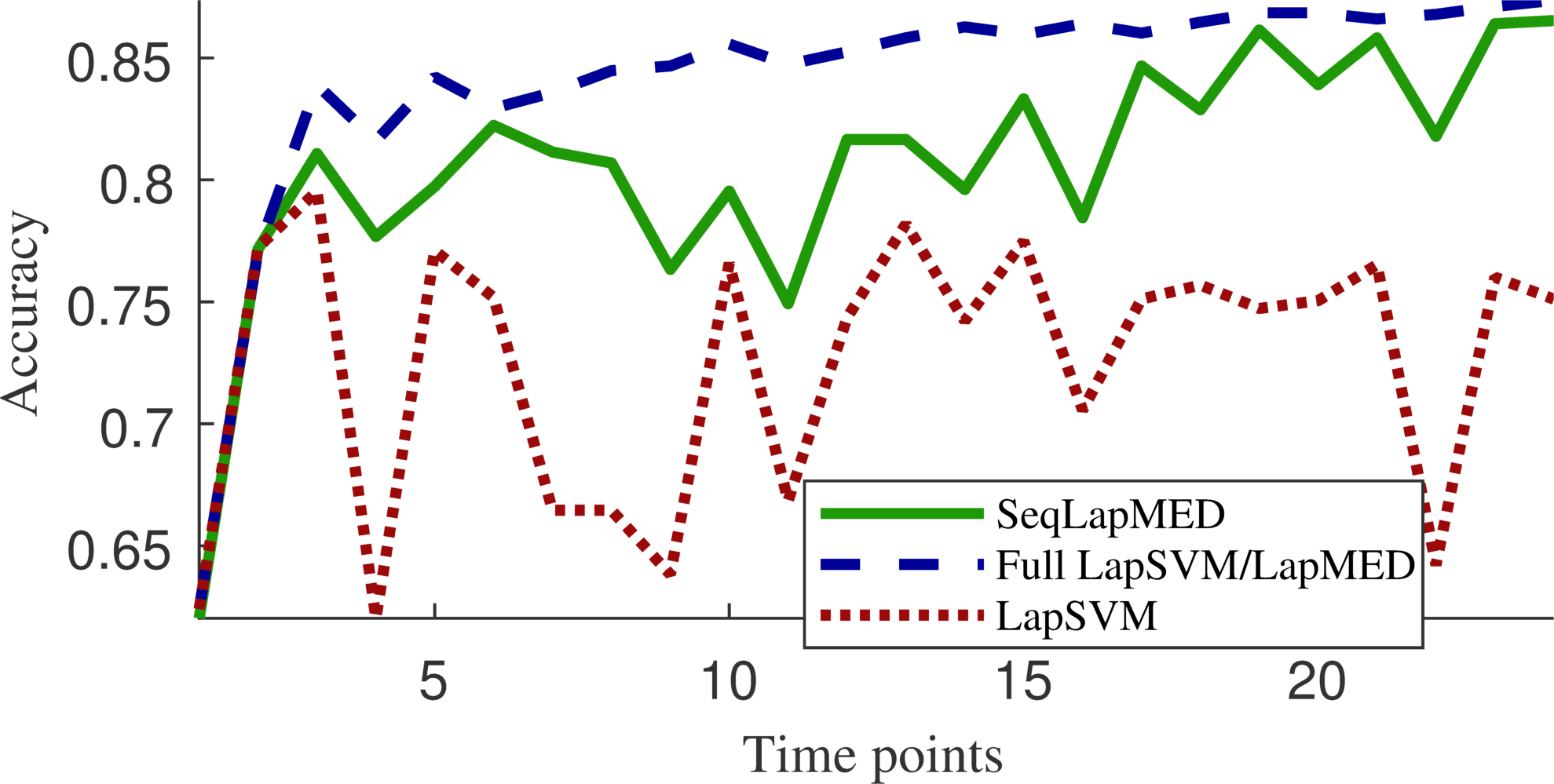}}
	\end{minipage}
	\vspace{-20pt}
	\caption{Accuracy of prediction on isolet5 for models trained on partially labeled speech isolets 1-4. The proposed semi-supervised sequential Laplacian MED classifier (SeqLapMED) comes close to the full Laplacian SVM \cite{Belkin:2006:MRG:1248547.1248632} as time progresses. }
	\label{fig:data}
\end{figure}

\section{Conclusions}

We have proposed recursive versions of supervised and semi-supervised maximum margin classifiers in the minimum entropy discrimination (MED) classification framework. The proposed sequential maximum margin classifiers perform nearly as well as a much more computationally expensive fully re-trained maximum margin classifiers and significantly better than a classifier that ignores previous data.

\appendix
\section{Appendix}

\begin{proof}[Proof of Theorem \ref{thm:SeqMED}]
Let $\bm{\mu}_{(\tau-1)} = \sum_{t=1}^{\tau-1} f(\bm{X}_{(t)})^T \bm{Y}_{(t)} \hat{\bm{\alpha}}_{(t)}$ where $\bm{\mu}_{(0)} = \bm{0}$. At time $\tau$, let the priors be $\bm{\theta} \sim N(\bm{\mu}_{(\tau-1)}, \bm{\I})$, $b \sim N(0, \sigma^2) $ where $ \sigma^2 \rightarrow \infty $, and $\gamma_i \sim C_{(\tau)} e^{-C_{(\tau)} (1-\gamma_i)} \mathcal{I}(\gamma_i \le 1) $. Then the posterior $\P(\bm{\theta}, b, \bm{\gamma}| \left\{ \mathcal{D} \right\}_{t=1}^{\tau}) $
\begin{flalign*}
& = \frac{\P_0(\bm{\theta}) \P_0(b) \P_0(\bm{\gamma} )}{Z(\hat{\bm{\alpha}}_{(\tau)})} \exp  \left\{ \sum_{i=1}^{n_{(\tau)}}  \hat{\alpha}_{(\tau)i} \left (y_{(\tau)i} (f(\bm{X}_{(\tau)}) \bm{\theta} + b) - \gamma_i \right) \right\} \\
& = \frac{\P_0(\bm{\theta})}{Z_\theta (\hat{\bm{\alpha}}_{(\tau)})} \exp \left\{ \sum_{i=1}^{n_{(\tau)}}  \hat{\alpha}_{(\tau)i} y_{(\tau)i} f(\bm{X}_{(\tau)}) \bm{\theta} \right\}  \\
& \hspace{12pt} \frac{\P_0(\bm{b})}{Z_b(\hat{\bm{\alpha}}_{(\tau)})} \exp \left\{ b \sum_{i=1}^{n_{(\tau)}}  y_{(\tau)i} \hat{\alpha}_{(\tau)i} \right\} \hspace{-2pt} \frac{\prod_{i=1}^{n_{(\tau)}}  \P_0(\gamma_i ) }{Z_{\gamma_i}(\hat{\bm{\alpha}}_{(\tau)})} e^{- \sum_{i=1}^{n_{(\tau)}}  \hat{\alpha}_{(\tau)i} \gamma_i} \\
& = \P(\bm{\theta}| \bm{X}_{(1)}, \bm{y}_{(1)}, \dots, \bm{X}_{(\tau)}, \bm{y}_{(\tau)}) \P(\bm{b}| \bm{y}_{(1)}, \dots, \bm{y}_{(\tau)}) \P(\bm{\gamma}).
\end{flalign*} 

\noindent So the posterior of the weights $ \P(\bm{\theta}| \bm{X}_{(1)}, \bm{y}_{(1)}, \dots, \bm{X}_{(\tau)}, \bm{y}_{(\tau)}) $
\begin{flalign*}
& = \frac{\exp \left\{ -0.5 (\bm{\theta}-\bm{\mu}_{(\tau-1)})^T (\bm{\theta}-\bm{\mu}_{(\tau-1)}) + \hat{\bm{\alpha}}_{(\tau)}^T \bm{Y}_{(\tau)} f(\bm{X}_{(\tau)}) \bm{\theta} \right\} }{ (2\pi )^{p/2} Z(\hat{\bm{\alpha}}_{(\tau)}) } \\
& = \frac{ \frac{\exp \left\{ -0.5 \left( \bm{\theta}^T \bm{\theta} -2\bm{\mu}_{(\tau-1)}^T \bm{\theta} - 2\hat{\bm{\alpha}}_{(\tau)}^T \bm{Y}_{(\tau)} f(\bm{X}_{(\tau)}) \bm{\theta} \right)  \right\}}{(2\pi )^{p/2}} }{ \int \frac{ \exp \left\{ -0.5 \left( \bm{\theta}^T \bm{\theta} -2\bm{\mu}_{(\tau-1)}^T \bm{\theta} - 2\hat{\bm{\alpha}}_{(\tau)}^T \bm{Y}_{(\tau)} f(\bm{X}_{(\tau)}) \bm{\theta} \right)  \right\}}{(2\pi )^{p/2}} d\bm{\theta}} \\
& = \exp \left\{ -0.5 \left( \bm{\theta}  -  (\bm{\mu}_{(\tau-1)} + f(\bm{X}_{(\tau)})^T \bm{Y}_{(\tau)} \hat{\bm{\alpha}}_{(\tau)} )\right) ^T \right. \\
& \left. \hspace{53pt} \left( \bm{\theta}  - (\bm{\mu}_{(\tau-1)} + f(\bm{X}_{(\tau)})^T \bm{Y}_{(\tau)} \hat{\bm{\alpha}}_{(\tau)} ) \right) \right\} \big/ (2\pi )^{p/2} \\
& \sim N( \bm{\mu}_{(\tau-1)} + f(\bm{X}_{(\tau)})^T \bm{Y}_{(\tau)} \hat{\bm{\alpha}}_{(\tau)} , \bm{\I}),
\end{flalign*}
the posterior of the bias term $\P(\bm{b}| \bm{y}_{(1)}, \dots, \bm{y}_{(\tau)})$
\begin{flalign*}
& = \frac{ (2\pi \sigma^2)^{-1/2}\exp \left\{ -0.5 ( b^2 - 2 \sigma^2 b \bm{y}_{(\tau)} ^T \hat{\bm{\alpha}}_{(\tau)} )/\sigma^2 \right\} }{\int  (2\pi \sigma^2)^{-1/2}\exp \left\{ -0.5 ( b^2 - 2 \sigma^2 b \bm{y}_{(\tau)} ^T \hat{\bm{\alpha}}_{(\tau)} )/\sigma^2 \right\} db}\\
& = \frac{e^{ -0.5 (b - \sigma^2 \bm{y}_{(\tau)} ^T \hat{\bm{\alpha}}_{(\tau)} )^2/\sigma^2}}{\sqrt{2\pi \sigma^2}} \sim N(\sigma^2 \bm{y}_{(\tau)} ^T \hat{\bm{\alpha}}_{(\tau)} , \sigma^2)  \\
& \Rightarrow \text{if } \sigma \rightarrow \infty, \text{then } N(\sigma^2 \bm{y}_{(\tau)} ^T \hat{\bm{\alpha}}_{(\tau)} , \sigma^2)  \rightarrow N(0, \infty) \\
& \text{ as long as the optimal Lagrange multipliers satisfy } \bm{y}_{(\tau)}^T \hat{\bm{\alpha}}_{(\tau)}  = 0,
\end{flalign*}
and the posterior of the margin parameters $ \P(\bm{\gamma}) $ do not depend on the data.

\end{proof}

\begin{proof}[Proof of Corollary \ref{coll:kern}]
At time $\tau$, let $\bm{\omega} = f(\bm{X}_{(\tau)}) \bm{\theta}$ have prior $ N(\bm{\mu}_{(\tau-1)}, \bm{K}_{(\tau)}) $ where $ \bm{\mu}_{(\tau-1)} = \sum_{t=1}^{\tau-1} k(\bm{X}_{(\tau)} , \bm{X}_{(t)}) \bm{Y}_{(t)} \hat{\bm{\alpha}}_{(t)} $. Then the posterior $ \P(\bm{\omega}| \bm{X}_{(1)}, \bm{y}_{(1)}, \dots, \bm{X}_{(\tau)}, \bm{y}_{(\tau)}) $
\begin{flalign*}
& = \frac{\P_0(\bm{\omega})}{Z_\omega (\hat{\bm{\alpha}}_{(\tau)})} \exp \left\{ \sum_{i=1}^{n_{(\tau)}}  \hat{\alpha}_{(\tau)i} y_{(\tau)i} \bm{\omega} \right\} \ \\
& = \frac{\exp \left\{ -0.5 ( \bm{\omega}  -  \bm{\mu}_{(\tau-1)})^T \bm{K}_{(\tau)}^{-1} (\bm{\omega}  -  \bm{\mu}_{(\tau-1)})  +\hat{\bm{\alpha}}_{(\tau)}^{T} \bm{Y}_{(\tau)} \bm{\omega} \right\} }{ |2\pi \bm{K}_{(\tau)}|^{1/2} Z_\omega (\hat{\bm{\alpha}}_{(\tau)}) } \\
& = \frac{e^{-0.5 \left( \bm{\omega}  -  (\bm{\mu}_{(\tau-1)} + \bm{K}_{(\tau)} \bm{Y}_{(\tau)} \hat{\bm{\alpha}}_{(\tau)} ) \right)^T \bm{K}_{(\tau)}^{-1} \left(\bm{\omega}  -  (\bm{\mu}_{(\tau-1)} + \bm{K}_{(\tau)} \bm{Y}_{(\tau)} \hat{\bm{\alpha}}_{(\tau)} ) \right) } }{  |2\pi \bm{K}_{(\tau)}|^{1/2} } \\
& \sim N(\bm{\mu}_{(\tau-1)} + \bm{K}_{(\tau)} \bm{Y}_{(\tau)} \hat{\bm{\alpha}}_{(\tau)}  , \bm{K}_{(\tau)}).
\end{flalign*}
\end{proof}

\begin{proof}[Proof of Corollary \ref{coll:Opt_Lagrange}]
The optimal Lagrange multipliers at $t = \tau$ are the solution to $\underset{\bm{\alpha}_{(\tau)}}{\arg\max} -\log \left( Z(\bm{\alpha}_{(\tau)}) \right)$
$$ = \underset{\bm{\alpha}_{(\tau)}}{\arg\max} -\log \left( Z_\theta(\bm{\alpha}_{(\tau)}) \right)  -\log \left( Z_b(\bm{\alpha}_{(\tau)}) \right) -\log \left( Z_\gamma (\bm{\alpha}_{(\tau)}) \right) $$
or
$$ = \underset{\bm{\alpha}_{(\tau)}}{\arg\max} -\log \left( Z_\omega(\bm{\alpha}_{(\tau)}) \right)  -\log \left( Z_b(\bm{\alpha}_{(\tau)}) \right) -\log \left( Z_\gamma (\bm{\alpha}_{(\tau)}) \right) $$
where $ -\log \left( Z_\theta (\bm{\alpha}_{(\tau)}) \right) $
\begin{flalign*}
& = -\log \left( \int \frac{ e^{\bm{\alpha}_{(\tau)}^T \bm{Y}_{(\tau)} f(\bm{X}_{(\tau)}) \bm{\theta} - 0.5 (\bm{\theta}-\bm{\mu}_{(\tau-1)})^T (\bm{\theta}-\bm{\mu}_{(\tau-1)}) }}{(2\pi )^{p/2}} d\bm{\theta} \right) \\
& =  -\bm{\alpha}_{(\tau)}^{T} \bm{Y}_{(\tau)} f(\bm{X}_{(\tau)}) \bm{\mu}_{(\tau-1)} \\
& \hspace{10pt} - 0.5 \bm{\alpha}_{(\tau)}^{T} \bm{Y}_{(\tau)} f(\bm{X}_{(\tau)}) f(\bm{X}_{(\tau)})^T \bm{Y}_{(\tau)} \bm{\alpha}_{(\tau)},
\end{flalign*}
$-\log \left( Z_\omega (\bm{\alpha}_{(\tau)})  \right) $
\begin{flalign*}
& = -\log \left( \int \frac{ e^{\bm{\alpha}_{(\tau)}^{T} \bm{Y}_{(\tau)} \bm{\omega}  -0.5 ( \bm{\omega}  -  \bm{\mu}_{(\tau-1)})^T \bm{K}_{(\tau)}^{-1} (\bm{\omega}  -  \bm{\mu}_{(\tau-1)}) } }{ |2\pi \bm{K}_{(\tau)}|^{1/2} }  d\bm{\omega} \right)  \\
& =  -\bm{\alpha}_{(\tau)}^{T} \bm{Y}_{(\tau)} \bm{\mu}_{(\tau-1)} - 0.5 \bm{\alpha}_{(\tau)}^{T} \bm{Y}_{(\tau)} \bm{K}_{(\tau)} \bm{Y}_{(\tau)} \bm{\alpha}_{(\tau)}, 
\end{flalign*}
$ -\log \left( Z_b (\bm{\alpha}_{(\tau)})  \right) $
\begin{flalign*}
& = -\log \left(\int \frac{ e^{ -0.5 (b - \sigma^2 \bm{y}_{(\tau)}^{T} \bm{\alpha}_{(\tau)} )^2/\sigma^2} }{ \sqrt{2\pi \sigma^2}}  db \right) -\log \left( e^{0.5 \sigma^2 (\bm{y}_{(\tau)}^{T} \bm{\alpha}_{(\tau)} )^2} \right) \\
& = -0.5 \sigma^2 (\bm{y}_{(\tau)}^{T} \bm{\alpha}_{(\tau)} )^2  \Rightarrow \text{if } \sigma \rightarrow \infty, \text{ then }\bm{y}_{(\tau)}^{T} \bm{\alpha}_{(\tau)} = 0
\end{flalign*}
and $  -\log \left( Z_\gamma (\bm{\alpha}_{(\tau)})  \right) = - \sum_{i=1}^{n_{(\tau)}} \log \left( Z_{\gamma_i} (\bm{\alpha}_{(\tau)})  \right) $
\begin{flalign*} 
& = - \sum_{i=1}^{n_{(\tau)}} \log \left( \int_{-\infty}^{1} C_{(\tau)} e^{-C_{(\tau)}(1-\gamma_i)} e^{- \alpha_{(\tau)i} \gamma_i} \, d\gamma_i \right) \\
& = - \sum_{i=1}^{n_{(\tau)}}\log \left( \left. \frac{C_{(\tau)}}{C_{(\tau)} - \alpha_{(\tau)i}} e^{-C_{(\tau)} + \gamma_i (C_{(\tau)} - \alpha_{(\tau)i})} \right|^1_{-\infty} \right) \\
& = - \sum_{i=1}^{n_{(\tau)}} \log \left(\frac{C_{(\tau)} e^{-\alpha_{(\tau)i}}  }{C_{(\tau)} - \alpha_{(\tau)i}} \right) = \sum_{i=1}^{n_{(\tau)}} \alpha_{(\tau)i} + \log \left(1 - \frac{\alpha_{(\tau)i} }{ C_{(\tau)} } \right) .
\end{flalign*} 
\end{proof}

\begin{proof}[Proof of Theorem \ref{thm:SeqLapMED}]
At time $\tau$, let the priors for $b$ and $\gamma_i$ be the same as in Theorem \ref{thm:SeqMED}, $ \lambda \sim Exp.(\nu) $ where $\nu \rightarrow \infty$, and $\bm{\theta} \, (\text{or }\omega) \sim N \left(\bm{\mu}_{(\tau-1)}, \bm{\Sigma}_{(\tau-1)} \right) $. Then the posterior $ \P(\bm{\theta}, b, \bm{\gamma}, \lambda| \left\{ \mathcal{D} \right\}_{t=1}^{\tau})  $ and partition function $ Z_\theta(\bm{\alpha}_{(\tau)}, \beta_{(\tau)})$ factorize similarly as
$$ \P(\bm{\theta}| \bm{X}_{(1)}, \bm{y}_{(1)}, \dots, \bm{X}_{(\tau)}, \bm{y}_{(\tau)}) \P(\bm{b}| \bm{y}_{(1)}, \dots, \bm{y}_{(\tau)}) \P(\bm{\gamma}) \P(\lambda) $$
and 
$$ Z_\theta(\bm{\alpha}_{(\tau)}, \beta_{(\tau)}) Z_{b}(\bm{\alpha}_{(\tau)}) Z_\lambda(\beta_{(\tau)}) \prod_{i=1}^{l_{(\tau)}} Z_{\gamma_i}(\bm{\alpha}_{(\tau)}) .$$
The bias and margin terms are independent of $\beta_{(\tau)} $, so their posterior and partition functions are the same as in Theorem \ref{thm:SeqMED}. The posterior of the smoothness parameter $\lambda$ does not depend on the data and $  -\log \left( Z_\lambda(\beta_{(\tau)}) \right) $
\begin{flalign*} 
& = -\log \left( \int_{0}^{\infty} \nu e^{-\nu \lambda} e^{\beta_{(\tau)} \lambda} \, d\lambda \right) = -\log \left( \frac{\nu}{\nu - \beta_{(\tau)}} \right) \\
&  \Rightarrow \text{ if } \nu \rightarrow \infty, \text{ then } \log(1 - \beta_{(\tau)} / \nu) = 0 .
\end{flalign*}
Let the parameters for the prior distribution of $\bm{\theta}$ be
$$ 
\bm{\mu}_{(\tau-1)} = \bm{G}_{(\tau-1)}^{-1} \sum_{t=1}^{\tau-1} f(\bm{X}_{(t)})^T \bm{J}^T  \bm{Y}_{(t)}  \hat{\bm{\alpha}}_{(t)}, \,\, \bm{\Sigma}_{(\tau-1)} = \bm{G}_{(\tau-1)}^{-1}, 
$$
where $ \bm{G}_{(\tau-1)} = \bm{G}_{(\tau-2)} + 2\beta_{(\tau-1)} f(\bm{X}_{(\tau-1)})^T \bm{L}_{(\tau-1)} f(\bm{X}_{(\tau-1)}) $, and $\bm{G}_{(0)} = \bm{\I}$, then
the posterior $ \P(\bm{\theta}| \bm{X}_{(1)}, \bm{y}_{(1)}, \dots, \bm{X}_{(\tau)}, \bm{y}_{(\tau)}) $
\begin{flalign*}
& =  \frac{\exp \left\{ -0.5 (\bm{\theta} - \bm{\mu}_{(\tau-1)})^T \bm{\Sigma}_{(\tau-1)}^{-1} (\bm{\theta} - \bm{\mu}_{(\tau-1)} )  \right\}}{\det(2\pi \bm{\Sigma}_{(\tau-1)})^{1/2}} \\
& \hspace{12pt} \frac{\exp \left\{ \hat{\bm{\alpha}}_{(\tau)}^T\bm{Y}_{(\tau)} \bm{J} f(\bm{X}_{(\tau)}) \bm{\theta} - \beta_{(\tau)} \bm{\theta}^T f(\bm{X}_{(\tau)})^T \bm{L}_{(\tau)} f(\bm{X}_{(\tau)}) \bm{\theta} \right\} }{ Z_\theta(\hat{\bm{\alpha}}_{(\tau)}, \beta_{(\tau)} ) } \\
& = \exp \left\{ -0.5 \left( \bm{\theta}^T  \bm{G}_{(\tau-1)} \bm{\theta} - 2 \bm{\theta}^T \sum_{t=1}^{\tau-1} f(\bm{X}_{(t)})^T \bm{J}^T  \bm{Y}_{(t)}  \hat{\bm{\alpha}}_{(t)} \right. \right. \\
& + \hspace{-2pt} \left(\sum_{t=1}^{\tau-1} f(\bm{X}_{(t)})^T \hspace{-2pt} \bm{J}^T  \bm{Y}_{(t)}  \hat{\bm{\alpha}}_{(t)} \hspace{-3pt} \right)^T \hspace{-8pt} \bm{G}_{(\tau-1)}^{-1} \hspace{-2pt} \left(\sum_{t=1}^{\tau-1} f(\bm{X}_{(t)})^T \hspace{-2pt} \bm{J}^T  \bm{Y}_{(t)}  \hat{\bm{\alpha}}_{(t)} \hspace{-3pt} \right) \\
& - 2 \hat{\bm{\alpha}}_{(\tau)}^T\bm{Y}_{(\tau)} \bm{J} f(\bm{X}_{(\tau)}) \bm{\theta} +2 \beta_{(\tau)} \bm{\theta}^T f(\bm{X}_{(\tau)})^T \bm{L}_{(\tau)} f(\bm{X}_{(\tau)}) \bm{\theta} \Bigg) \Bigg\} \\
& \bigg/ \left( \det(2\pi \bm{G}_{(\tau-1)}^{-1})^{1/2}   Z_\theta(\hat{\bm{\alpha}}_{(\tau)}, \beta_{(\tau)}) \right) \\
& = \exp \left\{ -0.5 \left( \bm{\theta}^T \bm{G}_{(\tau)} \bm{\theta} - 2 \bm{\theta}^T \sum_{t=1}^{\tau} f(\bm{X}_{(t)})^T \bm{J}^T  \bm{Y}_{(t)}  \hat{\bm{\alpha}}_{(t)} \right.\right. \\
& + \hspace{-2pt} \left(\sum_{t=1}^{\tau} f(\bm{X}_{(t)})^T \hspace{-2pt} \bm{J}^T  \bm{Y}_{(t)}  \hat{\bm{\alpha}}_{(t)} \hspace{-3pt} \right)^T \hspace{-8pt} \bm{G}_{(\tau)}^{-1} \hspace{-2pt} \left(\sum_{t=1}^{\tau} f(\bm{X}_{(t)})^T \hspace{-2pt} \bm{J}^T  \bm{Y}_{(t)}  \hat{\bm{\alpha}}_{(t)} \hspace{-3pt} \right) \hspace{-3pt} \Bigg) \\
& - \hspace{-2pt} \left(\sum_{t=1}^{\tau-1} f(\bm{X}_{(t)})^T \hspace{-2pt} \bm{J}^T  \bm{Y}_{(t)}  \hat{\bm{\alpha}}_{(t)} \hspace{-3pt} \right)^T \hspace{-4pt} \frac{ \bm{G}_{(\tau-1)}^{-1} }{2} \hspace{-2pt} \left(\sum_{t=1}^{\tau-1} f(\bm{X}_{(t)})^T \hspace{-2pt} \bm{J}^T  \bm{Y}_{(t)}  \hat{\bm{\alpha}}_{(t)} \hspace{-3pt} \right) \\
& + \hspace{-2pt} \left(\sum_{t=1}^{\tau} f(\bm{X}_{(t)})^T \hspace{-2pt} \bm{J}^T  \bm{Y}_{(t)}  \hat{\bm{\alpha}}_{(t)} \hspace{-3pt} \right)^T \hspace{-4pt} \frac{ \bm{G}_{(\tau)}^{-1} }{2} \hspace{-2pt} \left(\sum_{t=1}^{\tau} f(\bm{X}_{(t)})^T \hspace{-2pt} \bm{J}^T  \bm{Y}_{(t)}  \hat{\bm{\alpha}}_{(t)} \hspace{-3pt} \right)
 \hspace{-3pt} \Bigg) \hspace{-2pt} \Bigg\} \\
& \hspace{6pt} \bigg/ \left( \det(2\pi \bm{G}_{(\tau-1)}^{-1})^{1/2}   Z_\theta(\hat{\bm{\alpha}}_{(\tau)}, \beta_{(\tau)}) \right) \\
& = \exp \left\{ -0.5 \left( \bm{\theta} - \bm{G}_{(\tau)}^{-1} \sum_{t=1}^{\tau} f(\bm{X}_{(t)})^T \bm{J}^T  \bm{Y}_{(t)}  \hat{\bm{\alpha}}_{(t)} \right)^T \bm{G}_{(\tau)} \right. \\
& \left. \left( \bm{\theta} - \bm{G}_{(\tau)}^{-1} \sum_{t=1}^{\tau} f(\bm{X}_{(t)})^T \bm{J}^T  \bm{Y}_{(t)}  \hat{\bm{\alpha}}_{(t)} \right) \right\} \bigg/ \det(2\pi \bm{G}_{(\tau)}^{-1})^{1/2} \\
& \sim N\left( \bm{G}_{(\tau)}^{-1} \sum_{t=1}^{\tau} f(\bm{X}_{(t)})^T \bm{J}^T  \bm{Y}_{(t)}  \hat{\bm{\alpha}}_{(t)}, (\bm{G}_{(\tau)})^{-1}  \right)
\end{flalign*}
and $ -\log \left( Z_\theta (\bm{\alpha}_{(\tau)}), \beta_{(\tau)}  \right) $
\begin{flalign*}
& = -\log \left( \det(2\pi \bm{G}_{(\tau)}^{-1})^{1/2} / \det(2\pi \bm{G}_{(\tau-1)}^{-1})^{1/2} \right) \\
& - 0.5 \Bigg( \hspace{-4pt} \left(\sum_{t=1}^{\tau} f(\bm{X}_{(t)})^T \bm{J}^T  \bm{Y}_{(t)}  \bm{\alpha}_{(t)} \hspace{-2pt} \right)^T \hspace{-8pt} \bm{G}_{(\tau)}^{-1} \hspace{-2pt} \left(\sum_{t=1}^{\tau} f(\bm{X}_{(t)})^T \bm{J}^T  \bm{Y}_{(t)}  \bm{\alpha}_{(t)} \hspace{-2pt} \right) \\
& + \left(\sum_{t=1}^{\tau-1} f(\bm{X}_{(t)})^T \bm{J}^T  \bm{Y}_{(t)}  \bm{\alpha}_{(t)} \hspace{-2pt} \right)^T \hspace{-8pt} \bm{G}_{(\tau-1)}^{-1} \hspace{-2pt} \left(\sum_{t=1}^{\tau-1} f(\bm{X}_{(t)})^T \bm{J}^T  \bm{Y}_{(t)}  \bm{\alpha}_{(t)} \hspace{-2pt} \right) \hspace{-4pt} \Bigg)
\end{flalign*}
\begin{flalign*}
& = 0.5 \Bigg( \hspace{-3pt} \log \left(\det(2\pi \bm{G}_{(\tau-1)}^{-1}) \right) + \left(\sum_{t=1}^{\tau-1} f(\bm{X}_{(t)})^T \bm{J}^T  \bm{Y}_{(t)} \bm{\alpha}_{(t)} \right)^T  \\
& \hspace{26pt} \left( \hspace{-3pt} \bm{G}_{(\tau-1)} \left( 2\beta_{(\tau)} f(\bm{X}_{(\tau)})^T \hspace{-2pt} \bm{L}_{(\tau)} f(\bm{X}_{(\tau)}) \right)^{-1} \hspace{-10pt}\bm{G}_{(\tau-1)} \hspace{-2pt} + \hspace{-1pt}\bm{G}_{(\tau-1)} \hspace{-1pt} \right)^{-1} \\
& \hspace{26pt} \left(\sum_{t=1}^{\tau-1} f(\bm{X}_{(t)})^T \bm{J}^T  \bm{Y}_{(t)} \bm{\alpha}_{(t)} \right) -\log \left( \det(2\pi \bm{G}_{(\tau)}^{-1}) \right) \hspace{-3pt} \Bigg) \\
& - 0.5 \, \bm{\alpha}_{(\tau)}^T\bm{Y}_{(\tau)} \bm{J} f(\bm{X}_{(\tau)}) \bm{G}_{(\tau)}^{-1} f(\bm{X}_{(\tau)})^T \bm{J}^T  \bm{Y}_{(\tau)} \bm{\alpha}_{(\tau)} \\
& - \bm{\alpha}_{(\tau)}^T\bm{Y}_{(\tau)} \bm{J} f(\bm{X}_{(\tau)}) \bm{G}_{(\tau)}^{-1}  \sum_{t=1}^{\tau-1} f(\bm{X}_{(t)})^T \bm{J}^T  \bm{Y}_{(t)}  \bm{\alpha}_{(t)} \\
& = 0.5 \left( Const._{\beta_{(\tau)}} \hspace{-4pt} - \bm{\alpha}_{(\tau)}^T\bm{Y}_{(\tau)} \bm{J} f(\bm{X}_{(\tau)}) \bm{G}_{(\tau)}^{-1} f(\bm{X}_{(\tau)})^T \bm{J}^T  \bm{Y}_{(\tau)} \bm{\alpha}_{(\tau)} \right) \\
& - \bm{\alpha}_{(\tau)}^T\bm{Y}_{(\tau)} \bm{J} f(\bm{X}_{(\tau)}) \bm{G}_{(\tau)}^{-1}  \sum_{t=1}^{\tau-1} f(\bm{X}_{(t)})^T \bm{J}^T  \bm{Y}_{(t)}  \bm{\alpha}_{(t)} 
\end{flalign*}
where $Const._{\beta_{(\tau)}} $ can be dropped from the objective when $\beta_{(t)}$ are fixed parameters. \\
Or let the parameters for the prior distribution of $\bm{\omega} = f(\bm{X}_{(\tau)}) \bm{\theta}$ be
\begin{flalign*}
& \bm{\mu}_{(\tau-1)} = \sum_{t=1}^{\tau-1} k_{(\tau-1)} ( \bm{X}_{(\tau)}, \bm{X}_{(t)} ) \bm{J}^T \bm{Y}_{(t)}  \hat{\bm{\alpha}}_{(t)} \\
& \bm{\Sigma}_{(\tau-1)} = k_{(\tau-1)} ( \bm{X}_{(\tau)}, \bm{X}_{(\tau)} ) \text{ where } k_{(0)}( \bm{x}, \bm{x}') = \langle f(\bm{x}), f(\bm{x}') \rangle \\
& k_{(\tau-1)}( \bm{x}, \bm{x}') = k_{(\tau-2)}(\bm{x}, \bm{x}') - k_{(\tau-2)}(\bm{x}, \bm{X}_{(\tau-1)}) \\
& \hspace{-4pt} \left( \hspace{-2pt} \left(2 \beta_{(\tau-1)}\bm{L}_{(\tau-1)}\right)^{-1} + k_{(\tau-2)} \left(\bm{X}_{(\tau-1)}, \bm{X}_{(\tau-1)} \right) \hspace{-2pt} \right)^{-1} \hspace{-10pt} k_{(\tau-2)}( \bm{X}_{(\tau-1)}, \bm{x}') .
\end{flalign*}
The posterior $ \P(\bm{\omega}| \bm{X}_{(1)}, \bm{y}_{(1)}, \dots, \bm{X}_{(\tau)}, \bm{y}_{(\tau)}) $
\begin{flalign*}
& =  \frac{e^{-0.5 (\bm{\omega} - \bm{\mu}_{(\tau-1)})^T \bm{\Sigma}_{(\tau-1)}^{-1} (\bm{\omega} - \bm{\mu}_{(\tau-1)} ) }}{\det(2\pi \bm{\Sigma}_{(\tau-1)})^{1/2}} \frac{e^{\hat{\bm{\alpha}}_{(\tau)}^T\bm{Y}_{(\tau)} \bm{J} \bm{\omega} - \beta_{(\tau)} \bm{\omega}^T \bm{L}_{(\tau)} \bm{\omega} } }{ Z_\omega(\hat{\bm{\alpha}}_{(\tau)}, \beta_{(\tau)} ) } \\
& = \exp \bigg\{ \hspace{-2pt} -0.5 \bigg( \bm{\omega}^T  \left( k_{(\tau-1)} ( \bm{X}_{(\tau)}, \bm{X}_{(\tau)} )^{-1} + 2 \beta_{(\tau)}  \bm{L}_{(\tau)} \right) \bm{\omega} \\
& - \left. \left. \hspace{-2pt} 2 \bm{\omega}^T k_{(\tau-1)} ( \bm{X}_{(\tau)}, \bm{X}_{(\tau)} )^{-1} \Bigg( \sum_{t=1}^{\tau-1} k_{(\tau-1)} (\bm{X}_{(\tau)}, \bm{X}_{(t)}) \bm{J}^T  \bm{Y}_{(t)}  \hat{\bm{\alpha}}_{(t)} \right. \right. \\
& \hspace{120pt} + k_{(\tau-1)} ( \bm{X}_{(\tau)}, \bm{X}_{(\tau)} ) \bm{J}^T \bm{Y}_{(\tau)} \hat{\bm{\alpha}}_{(\tau)} \hspace{-2pt} \Bigg) \\
& + \hspace{-2pt} \left( \sum_{t=1}^{\tau-1} k_{(\tau-1)} ( \bm{X}_{(\tau)}, \bm{X}_{(t)} )  \bm{J}^T  \bm{Y}_{(t)}  \hat{\bm{\alpha}}_{(t)} \hspace{-2pt} \right)^T k_{(\tau-1)} ( \bm{X}_{(\tau)}, \bm{X}_{(\tau)} )^{-1}  \\
& \hspace{8pt} \left( \sum_{t=1}^{\tau-1} k_{(\tau-1)} ( \bm{X}_{(\tau)}, \bm{X}_{(t)} )  \bm{J}^T  \bm{Y}_{(t)}  \hat{\bm{\alpha}}_{(t)} \right) \bigg) \bigg\} \\
& \hspace{6pt} \bigg/ \left( \det \left(2\pi k_{(\tau-1)} ( \bm{X}_{(\tau)}, \bm{X}_{(\tau)} ) \right)^{1/2}   Z_\omega (\hat{\bm{\alpha}}_{(\tau)}, \beta_{(\tau)}) \right) \\
& = \exp \Bigg\{ \hspace{-4pt} -0.5 \left( \bm{\omega} - \sum_{t=1}^{\tau} k_{(\tau)} \left( \bm{X}_{(\tau)}, \bm{X}_{(t)} \right) \bm{J}^T  \bm{Y}_{(t)}  \hat{\bm{\alpha}}_{(t)} \right)^T  \\
& \hspace{12pt} k_{(\tau)} ( \bm{X}_{(\tau)}, \bm{X}_{(\tau)} )^{-1} \hspace{-2pt} \left( \bm{\omega} - \sum_{t=1}^{\tau} k_{(\tau)} \left( \bm{X}_{(\tau)}, \bm{X}_{(t)} \right) \bm{J}^T  \bm{Y}_{(t)}  \hat{\bm{\alpha}}_{(t)} \right) \hspace{-3pt} \Bigg\} \\
& \hspace{6pt} \bigg/ \det \left(2\pi  k_{(\tau)} ( \bm{X}_{(\tau)}, \bm{X}_{(\tau)} ) \right)^{1/2} \\
& \sim N\left( \sum_{t=1}^{\tau} k_{(\tau)} ( \bm{X}_{(\tau)}, \bm{X}_{(t)} ) \bm{J}^T  \bm{Y}_{(t)}  \hat{\bm{\alpha}}_{(t)}, k_{(\tau)} ( \bm{X}_{(\tau)}, \bm{X}_{(\tau)} ) \right)
\end{flalign*}
and  $ -\log \left( Z_\omega (\bm{\alpha}_{(\tau)}), \beta_{(\tau)}  \right) $
\begin{flalign*}
& = - 0.5 \Bigg( \log \left( \det \left(k_{(\tau)} ( \bm{X}_{(\tau)}, \bm{X}_{(\tau)} ) \, k_{(\tau-1)} ( \bm{X}_{(\tau)}, \bm{X}_{(\tau)} )^{-1} \right) \right) \\
& - \left(\sum_{t=1}^{\tau} k_{(\tau)} ( \bm{X}_{(\tau)}, \bm{X}_{(t)} ) \bm{J}^T  \bm{Y}_{(t)} \bm{\alpha}_{(t)} \right)^T k_{(\tau)} ( \bm{X}_{(\tau)}, \bm{X}_{(\tau)} )^{-1} \\
& \hspace{10pt} \left(\sum_{t=1}^{\tau} k_{(\tau)} ( \bm{X}_{(\tau)}, \bm{X}_{(t)} ) \bm{J}^T  \bm{Y}_{(t)} \bm{\alpha}_{(t)} \right) \\
& + \left(\sum_{t=1}^{\tau-1} k_{(\tau-1)} ( \bm{X}_{(\tau)}, \bm{X}_{(t)} ) \bm{J}^T  \bm{Y}_{(t)} \bm{\alpha}_{(t)} \right)^T \hspace{-6pt} k_{(\tau-1)} ( \bm{X}_{(\tau)}, \bm{X}_{(\tau)} )^{-1} \\
& \hspace{10pt}  \left(\sum_{t=1}^{\tau-1} k_{(\tau-1)} ( \bm{X}_{(\tau)}, \bm{X}_{(t)} ) \bm{J}^T  \bm{Y}_{(t)} \bm{\alpha}_{(t)} \right)  \Bigg) \\
& = 0.5 \left( Const._{\beta_{(\tau)}} \hspace{-4pt} - \bm{\alpha}_{(\tau)}^{T} \bm{Y}_{(\tau)} \bm{J} k_{(\tau)} ( \bm{X}_{(\tau)}, \bm{X}_{(\tau)} ) \bm{J}^T \bm{Y}_{(\tau)} \bm{\alpha}_{(\tau)} \right) \\
& - \bm{\alpha}_{(\tau)}^T\bm{Y}_{(\tau)} \bm{J} \sum_{t=1}^{\tau-1} k_{(\tau)} ( \bm{X}_{(\tau)}, \bm{X}_{(t)} ) \bm{J}^T  \bm{Y}_{(t)} \bm{\alpha}_{(t)} .
\end{flalign*}
\end{proof}

\bibliographystyle{IEEEbib}
\bibliography{refs} 

\begin{thebibliography}{10}

\bibitem{NIPS1999_1733}
Tommi Jaakkola, Marina Meila, and Tony Jebara,
\newblock ``Maximum entropy discrimination,''
\newblock in {\em Advances in Neural Information Processing Systems 12}, S.A.
  Solla, T.K. Leen, and K.~M\"{u}ller, Eds. 2000, pp. 470--476, MIT Press.

\bibitem{wahba1999support}
Grace Wahba,
\newblock ``Support vector machines, reproducing kernel hilbert spaces and the
  randomized gacv,''
\newblock {\em Advances in Kernel Methods-Support Vector Learning}, vol. 6, pp.
  69--87, 1999.

\bibitem{jaakkola1999probabilistic}
Tommi~S Jaakkola and David Haussler,
\newblock ``Probabilistic kernel regression models.,''
\newblock in {\em AISTATS}, 1999.

\bibitem{smola1998connection}
Alex~J Smola, Bernhard Sch{\"o}lkopf, and Klaus-Robert M{\"u}ller,
\newblock ``The connection between regularization operators and support vector
  kernels,''
\newblock {\em Neural networks}, vol. 11, no. 4, pp. 637--649, 1998.

\bibitem{opper1999gaussian}
Manfred Opper and Ole Winther,
\newblock ``Gaussian process classification and svm: Mean field results and
  leave-one-out estimator,''
\newblock {\em Advances in Large Margin Classifiers}, 1999.

\bibitem{Sollich2002}
Peter Sollich,
\newblock ``Bayesian methods for support vector machines: Evidence and
  predictive class probabilities,''
\newblock {\em Machine Learning}, vol. 46, no. 1, pp. 21--52, 2002.

\bibitem{Rasmussen:2005:GPM:1162254}
Carl~Edward Rasmussen and Christopher K.~I. Williams,
\newblock {\em Gaussian Processes for Machine Learning (Adaptive Computation
  and Machine Learning)},
\newblock The MIT Press, 2005.

\bibitem{Belkin:2006:MRG:1248547.1248632}
Mikhail Belkin, Partha Niyogi, and Vikas Sindhwani,
\newblock ``Manifold regularization: A geometric framework for learning from
  labeled and unlabeled examples,''
\newblock {\em J. Mach. Learn. Res.}, vol. 7, pp. 2399--2434, Dec. 2006.

\bibitem{hou}
Elizabeth Hou, Kumar Sricharan, and Alfred~O Hero,
\newblock ``Latent laplacian maximum entropy discrimination for detection of
  high-utility anomalies,''
\newblock {\em IEEE Transactions on Information Forensics and Security}, vol.
  13, no. 6, pp. 1446--1459, June 2018.

\bibitem{koyejo2013representation}
Oluwasanmi Koyejo and Joydeep Ghosh,
\newblock ``A representation approach for relative entropy minimization with
  expectation constraints,''
\newblock in {\em ICML WDDL workshop}, 2013.

\bibitem{grigor2006heat}
Alexander Grigor’yan,
\newblock ``Heat kernels on weighted manifolds and applications,''
\newblock {\em Cont. Math}, vol. 398, pp. 93--191, 2006.

\bibitem{Lederman}
R.~R. Lederman and V.~Rokhlin,
\newblock ``On the analytical and numerical properties of the truncated laplace
  transform i.,''
\newblock {\em SIAM Journal on Numerical Analysis}, vol. 53, no. 3, pp.
  1214--1235, 2015.

\bibitem{elkan2005deriving}
Charles Elkan,
\newblock ``Deriving tf-idf as a fisher kernel,''
\newblock in {\em SPIRE}. Springer, 2005, vol. 3772, pp. 295--300.

\bibitem{Lichman:2013}
M.~Lichman,
\newblock ``{UCI} machine learning repository,'' 2013.

\end{thebibliography}

\end{document}